\documentclass[letterpaper]{article} 
\usepackage[]{aaai24}  
\usepackage{times}  
\usepackage{helvet}  
\usepackage{courier}  
\usepackage[hyphens]{url}  
\usepackage{graphicx} 
\urlstyle{rm} 
\usepackage{natbib}  
\usepackage{caption} 
\frenchspacing  
\setlength{\pdfpagewidth}{8.5in} 
\setlength{\pdfpageheight}{11in} 
%
\usepackage{algorithm}
\usepackage{algorithmic}

%
\usepackage{newfloat}
\usepackage{listings}
\DeclareCaptionStyle{ruled}{labelfont=normalfont,labelsep=colon,strut=off} 
\lstset{%
	basicstyle={\footnotesize\ttfamily},
	numbers=left,numberstyle=\footnotesize,xleftmargin=2em,
	aboveskip=0pt,belowskip=0pt,%
	showstringspaces=false,tabsize=2,breaklines=true}
\floatstyle{ruled}
\newfloat{listing}{tb}{lst}{}
\floatname{listing}{Listing}
%
\pdfinfo{
/TemplateVersion (2024.1)
}

\setcounter{secnumdepth}{2} 

%

\usepackage{caption, subcaption, amsmath, amssymb,amsthm}
\usepackage{multicol}
\newtheorem{thm}{Theorem}
\newtheorem{lem}[thm]{Lemma}

\newtheorem{thm*}{Theorem}
\newtheorem*{theorem*}{Theorem}
\newtheorem{prop*}{Proposition}
\newtheorem{lem*}{Lemma}
\newtheorem{exmp*}{Example}

\renewcommand{\>}{{\rightarrow}}

\newcommand{\R}{{\mathbb R}}

\newcommand{\1}{{\mathbf 1}}

\renewcommand{\b}{{\mathbf b}}

\newcommand{\h}{{\mathbf{h}}}

\newcommand{\w}{{\mathbf w}}
\newcommand{\x}{{\mathbf x}}
\newcommand{\bu}{{\mathbf u}}

\newcommand{\bphi}{{\boldsymbol \phi}}
\newcommand{\boldeta}{{\boldsymbol \eta}}

\title{Half-Space Feature Learning in Neural Networks}
\author {
    Yadav Mahesh Lorik\textsuperscript{\rm 1},
    Harish Guruprasad Ramaswamy\textsuperscript{\rm 1},
    Chandrashekar Lakshminarayanan\textsuperscript{\rm 1}
}
\affiliations {
    \textsuperscript{\rm 1}IIT Madras \\
    cs20s010@cse.iitm.ac.in, hariguru@cse.iitm.ac.in, 
    chandrashekar@cse.iitm.ac.in
}


\begin{document}

\maketitle

\title{Half-Space Feature Learning in Neural Networks}



\begin{abstract}
There currently exist two extreme viewpoints for neural network feature learning -- (i) Neural networks simply implement a kernel method (a la NTK) and hence no features are learned (ii) Neural networks can represent (and hence learn) intricate hierarchical features suitable for the data. We argue in this paper neither interpretation is likely to be correct based on a novel viewpoint. Neural networks can be viewed as a mixture of experts, where each expert corresponds to a (number of layers length) path through a sequence of hidden units. We use this alternate interpretation to motivate a model, called the Deep Linearly Gated Network (DLGN), which sits midway between deep linear networks and ReLU networks. Unlike deep linear networks, the DLGN is capable of learning non-linear features (which are then linearly combined), and unlike ReLU networks these features are ultimately simple -- each feature is effectively an indicator function for a region compactly described as an intersection of (number of layers) half-spaces in the input space. This viewpoint allows for a comprehensive global visualization of features, unlike the local visualizations for neurons based on saliency/activation/gradient maps. Feature learning in DLGNs is shown to happen and the mechanism with which this happens is through learning half-spaces in the input space that contain smooth regions of the target function. Due to the structure of DLGNs, the neurons in later layers are fundamentally the same as those in earlier layers -- they all represent a half-space -- however, the dynamics of gradient descent impart a distinct clustering to the later layer neurons. We hypothesize that ReLU networks also have similar feature learning behavior.

\end{abstract}

\section{Introduction}

The theoretical analysis of the success behind neural network learning has been a topic of huge interest. The questions remain largely unanswered, but several schools of thought have emerged, out of which we highlight the two most prominent ones below.

The theory of Neural Tangent Kernels \cite{Jacot+18} explains the infinite width neural network (under an appropriate initialization) with great precision by showing that they are equivalent to kernel machines. Some papers have taken this to the conclusion that finite width neural networks trained under gradient descent are essentially approximations to a kernel machine (e.g. \cite{Arora+19}). The Theorems in these papers require an extremely large number of hidden neurons per layer and are not directly applicable to real-world neural networks. Empirically, it has been shown that the kernel machines under-perform the SGD trained finite neural nets on most architectures/datasets.

The other prominent line of thought makes the case that the hierarchical nature of neural networks enables the representation of intricate highly symmetric functions with much lesser parameters than interpolation-based algorithms like kernel methods. There have been several examples of functions that can be represented efficiently with large-depth models, that need exponentially many parameters with any shallow model \cite{Telgarsky+16, Shalev-Shwartz+17}. It has also been observed that the number of linear regions in a neural network can be exponential in the number of neurons and depth of the network, \cite{Montufar+14}, and this representational capability has been proposed as a reason behind the success of neural network learning.

Both these views come with their own set of issues. The NTK viewpoint is a technically elegant argument, but it still falls short in explaining the superior performance of finite-width methods under SGD. It also essentially implies that no features are being learned by the neural network during training. The hierarchical feature learning viewpoint also fails by not showing even a single example where a neural network \emph{learns} automatically the existing hierarchical features in the data. In fact, it has been shown \cite{Shalev-Shwartz+17} that no gradient-based method is capable of learning certain multi-layer hierarchical structures in the data. A proper understanding of this feature learning interpretation, if true, would enable the construction of simple synthetic datasets with a hierarchical structure for which neural networks outperform kernel methods. Such a construction would have value in bringing out the strengths of the deep neural network architecture over shallow kernel methods, but no such dataset exists to the best of our knowledge. All known works that give examples of distributions where kernel methods perform provably poorly compared to gradient descent on a neural network are essentially those where the target function $f^*(\x) =  u(w^\top \x)$ for some univariate function $u$ and vector $\w$ \cite{Damian22, Shi22,Daniely20}. While these form an interesting set of problems, these do not sufficiently explain the `features' learned by the neural network in general learning problems.

In this paper, we provide supporting evidence for a position that straddles the two extremes --  
\emph{Neural networks learn data-appropriate non-linear features in the early stages of SGD training while the loss is still high, and combine these features linearly to learn a low-loss model in the later stages}. Other similar lines of work show that the NTK `aligns' with the data for deep linear networks \cite{Atanasov+22}, demonstrate this occurring in ReLU nets via analysing the spectral dynamics of the empirical NTK \cite{Baratin+21}, and support this by an analysis of the loss landscape \cite{Fort+20}.

The inference that the neural tangent kernel (and hence the learned features) change during training to become better suited to the task at hand is well established. But the mechanism by which this change happens is still largely murky. We provide a framework by means of which the dynamics of the learned features can be better studied. 

\subsection{Overview}
In Section \ref{sec:MoE} we frame ReLU networks and some other neural network models as a mixture of experts model with a large number of simple experts. In Section \ref{sec:active-path-regions} we build the main tools required for the analysis of deep networks in the mixture of experts paradigm. In Sections \ref{sec:neural-net-feature-learning} and \ref{sec:GD-resource} we demonstrate the feature learning dynamics of deep networks on some simple synthetic datasets and make some pertinent observations.

\section{The Mixture of Simple Experts Framework}
\label{sec:MoE}

\begin{figure*}
    \centering
    \includegraphics[scale = .9, width=0.45 \textwidth]{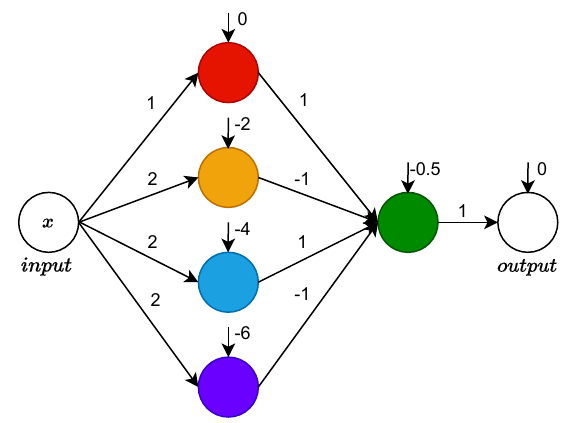}
    \includegraphics[scale = .9, width=0.45 \textwidth]{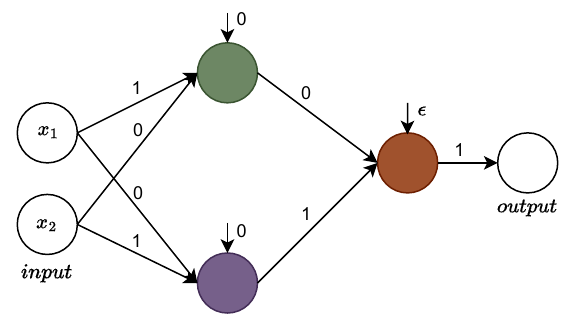}
    \caption{Example ReLU nets with weights and biases such that-- (\textbf{Left}:) the active path region through red and green hidden nodes is $[0.5,1.5] \cup [2.5, 3.5]$. (\textbf{Right}): The active path region through the green and brown hidden nodes abruptly changes from the right half of the $2$-dimensional input space to only the first quadrant when the parameter $\epsilon$ changes sign.}
    \label{fig:ReLU_nets_active_paths}
\end{figure*}
The classic mixture of experts (MoE) model \cite{Jacobs+91} has been a mainstay of machine learning with its intuitive breakdown of the prediction model into multiple `experts' which are chosen by a `gating model'. The classic MoE paradigm typically sets each individual expert to be a neural network and learns them together. In this section, we flip the argument on its head, and reframe a single neural network as a mixture of experts.

Consider the following simple mixture of experts model with set of experts $\Pi$.

\begin{equation}
\widehat y(\x)= \sum_{\pi \in \Pi} f_\pi(\x) g_\pi(\x) 
\label{eqn:MoE}
\end{equation}
where $f_\pi(\x) \in \{0,1\}$ is the `hard' gating model for expert $\pi$ and identifies the `region of expertise' for expert $\pi$. A `soft' gating model where $f_\pi(\x) \in [0,1]$ can also be used. $g_\pi(\x)$, corresponds to the prediction of expert $\pi$ for input $\x$. In the mixture of `simple' experts paradigm, the model $g_\pi$ is extremely simple, (in our examples it will be either a constant or a linear function of $\x$).  

Now, consider a simple neural network model that takes in $\x\in \R^d$ as input (called layer $0$) and has $L-1$ hidden layers (called layers $1$ to $L-1$) consisting of $m$-nodes each and single output node (called layer $L$). We identify the set of experts $\Pi$ in the MoE model with the set of paths between input and output in the neural network. As each path simply corresponds to a sequence of $L-1$ hidden nodes, the cardinality of $\Pi$ is $m^{L-1}$. Individually parameterising $f_\pi$ and $g_\pi$ for each path $\pi$ is practically impossible, and some clever choices are required to get a workable model whose size does not explode exponentially with the number of layers. Various such choices give rise to recognisable models.

\subsection{The ReLU Network as a Mixture of Simple Experts}
Each node $i$ in layer $\ell \in [L]=\{1, 2,\ldots,L-1,L\}$ is associated with a weight vector $\w_{\ell,i}$. Nodes in layer $\ell=1$ have weight vectors $\w_{\ell,i} \in \R^d$, and other layers have weight vectors $\w_{\ell,i} \in \R^m$. For each layer $\ell$, the weights $\w_{\ell,i}$ form the rows of the matrix $W_\ell$. Let $\pi=(i_1, i_2, \ldots, i_{L-1})\in  [m]^{L-1}$ represent a full path giving a sequence of $L-1$ hidden nodes. Consider the following gating and expert model.

\begin{equation}
f_\pi(\x) = \prod_{\ell=1}^{L-1} \1\left(\w_{\ell,i_\ell}^\top \h_{\ell-1}(\x) \geq 0 \right) \label{eqn:ReLU-net-gating-net}
\end{equation}

where $\h_0(\x) = \x$ and $\h_\ell(\x) = \bphi(W_\ell \h_{\ell-1}(\x))$, with $\bphi$ representing the point-wise ReLU function. Now consider the following expert prediction model which is a linear function of the input $\x$.
\begin{equation}
g_\pi(\x) = \left[ \prod_{\ell=2}^{L-1} w_{\ell,i_\ell,i_{\ell-1}}\right] w_{L,1,i_{L-1}} \w_{1,i_1}^\top \x
\label{eqn:RelU-net-individual-expert-model}
\end{equation}

\begin{thm}
    Let $\widehat y(\x)$ be the output of the mixture of experts model in Equation \ref{eqn:MoE}, with the gating model $f_\pi$ and expert model $g_\pi$ given by Equations \ref{eqn:ReLU-net-gating-net} and \ref{eqn:RelU-net-individual-expert-model}. Then 
    \[
        \widehat y(\x) = W_L \h_{L-1}(\x)
    \]
    where $h_0(\x)=\x$ and $h_\ell(\x)=\bphi(W_\ell h_{\ell-1}(\x))$ for $\ell \in \{1,\ldots,L-1\}$.
    \label{thm:relu-theorem}
\end{thm}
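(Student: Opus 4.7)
The plan is to expand the standard forward pass $W_L \h_{L-1}(\x)$ by repeatedly applying the elementary identity $\bphi(z) = \1(z \geq 0)\, z$ and to check that the resulting sum over length-$(L-1)$ paths matches Equation \ref{eqn:MoE} term by term, with the gating and expert factors matching Equations \ref{eqn:ReLU-net-gating-net} and \ref{eqn:RelU-net-individual-expert-model} respectively.

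First, applying the ReLU identity componentwise gives, for every hidden unit $i_\ell$ at layer $\ell \in \{1,\ldots,L-1\}$,
\[
(\h_\ell(\x))_{i_\ell} \;=\; \1\!\left(\w_{\ell,i_\ell}^\top \h_{\ell-1}(\x) \geq 0\right)\, \w_{\ell,i_\ell}^\top \h_{\ell-1}(\x).
\]
This is the only non-trivial manipulation; everything that follows is bookkeeping. I would then prove by induction on $\ell$ the path-sum identity
\[
(\h_\ell(\x))_{i_\ell} \;=\; \sum_{(i_1,\ldots,i_{\ell-1}) \in [m]^{\ell-1}} \left(\prod_{k=1}^{\ell} \1\!\left(\w_{k,i_k}^\top \h_{k-1}(\x) \geq 0\right)\right) \left(\prod_{k=2}^{\ell} w_{k,i_k,i_{k-1}}\right) \w_{1,i_1}^\top \x.
\]

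The base case $\ell = 1$ is exactly the rewrite above, with an empty preceding-index sum and an empty edge-weight product. For the inductive step, I would expand the inner product $\w_{\ell,i_\ell}^\top \h_{\ell-1}(\x) = \sum_{i_{\ell-1}} w_{\ell,i_\ell,i_{\ell-1}} (\h_{\ell-1}(\x))_{i_{\ell-1}}$, substitute the inductive formula for $(\h_{\ell-1}(\x))_{i_{\ell-1}}$, and pull the sum over the extended index tuple $(i_1,\ldots,i_{\ell-1})$ outside. The edge-weight products merge cleanly because the new factor $w_{\ell,i_\ell,i_{\ell-1}}$ has exactly the right indices, and the outer indicator from the ReLU rewrite contributes the $k=\ell$ term to the indicator product.

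Finally, applying the output layer, $\widehat y(\x) = W_L \h_{L-1}(\x) = \sum_{i_{L-1}} w_{L,1,i_{L-1}} (\h_{L-1}(\x))_{i_{L-1}}$, and inserting the $\ell = L-1$ case of the induction yields a sum over paths $\pi = (i_1,\ldots,i_{L-1}) \in [m]^{L-1}$ in which the indicator product is exactly $f_\pi(\x)$ and the product of the hidden-layer edge weights together with $w_{L,1,i_{L-1}}$ and $\w_{1,i_1}^\top \x$ is exactly $g_\pi(\x)$. The main ``obstacle'' here is purely notational: ensuring the ranges of the indicator product ($k = 1,\ldots,L-1$), the hidden-layer edge-weight product ($k = 2,\ldots,L-1$), and the separate output-weight factor $w_{L,1,i_{L-1}}$ all line up with Equations \ref{eqn:ReLU-net-gating-net} and \ref{eqn:RelU-net-individual-expert-model} as stated, rather than any substantive mathematical difficulty.
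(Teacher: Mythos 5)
Your proposal is correct and follows essentially the same route as the paper's own proof: an induction on layers establishing a path-sum expansion of each hidden unit $\h_{\ell,i_\ell}(\x)$ (driven by the identity $\bphi(z) = \1(z \geq 0)\,z$), followed by applying the output layer $W_L$ and matching terms with $f_\pi$ and $g_\pi$. The only cosmetic difference is that the paper additionally expands $\w_{1,i_1}^\top \x$ into a coordinate sum over $i_0$, whereas you keep it as an inner product; this changes nothing of substance.
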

We use networks without bias for simplicity, but biases can easily be incorporated by adding one extra neuron per layer representing the constant $1$ and freezing the input weights of these constant nodes.
The above decomposition of ReLU nets as a combination of  simple linear model experts has independent value. 

The gating model itself has a ReLU function in its definition and is thus hard to interpret or analyse. This gating model can represent complex shapes, e.g. there exist parameters $W_1, \ldots, W_L$ such that $A_\pi = \{\x: f_\pi(\x) = 1\}$ is not even a connected set for some path $\pi$. See  Figure \ref{fig:ReLU_nets_active_paths}(a) for an example. The gating model is discontinuous in its parameters $W,\b$ as well, i.e. there exists two parameters $W$ and $W'$ that are arbitrarily close, but for some path $\pi$ the set $A_\pi$ is drastically different for these two parameter settings. See Figure \ref{fig:ReLU_nets_active_paths}(b) for an example.
These issues with individual gating models makes both the learning dynamics and the learned model hard to interpret (at least via the MoE paradigm).

\subsection{Deep Linearly Gated Network}

Taking inspiration from the MoE view of the ReLU network, we propose the following gating and expert model. Let $\pi=(i_1, i_2, \ldots, i_{L-1})\in  [m]^{L-1}$ represent a full path giving a sequence of $L-1$ hidden nodes.

\begin{equation}
f_\pi(\x) = \prod_{\ell=1}^{L-1} \1\left(\boldeta_{\ell, i_\ell}(\x) \geq 0 \right) \label{eqn:DLGN-gating-model}
\end{equation}
where $\boldeta_0(\x) = \x$ and $\boldeta_\ell(\x) = W_\ell \boldeta_{\ell-1}(\x)$.
\begin{equation}
g_\pi(\x) = \left[ \prod_{\ell=2}^{L-1} u_{\ell,i_\ell,i_{\ell-1}}\right] u_{L,1,i_{L-1}} \bu_{1,i_1}^\top \x
\label{eqn:DLGN-individual-expert-model}
\end{equation}
The two differences between the above model and the ReLU net in Equations \ref{eqn:ReLU-net-gating-net} and \ref{eqn:RelU-net-individual-expert-model} is that the gating model no longer uses a ReLU non-linearity, and the individual expert model uses a distinct set of parameters $U_1, \ldots, U_L$ with the same shape as the $W$ parameters.

We call this architecture as the deep linearly gated network (DLGN)(illustration figures to be found in the appendix). Similar to the ReLU network MoE, it can also be computed efficiently without explicit enumeration of all the paths $\Pi$. 
\begin{thm}
Let $\widehat y(\x)$ be the output of the mixture of experts model in Equation \ref{eqn:MoE}, with the gating model $f_\pi$ and expert model $g_\pi$ given by Equations \ref{eqn:DLGN-gating-model} and \ref{eqn:DLGN-individual-expert-model}. Then 
\[
        \widehat y(\x) = U_L \left(\h_{L-1}(\x) \right)
\]
where $\boldeta_0(\x)=\h_0(\x) = \x$ and $\boldeta_\ell(\x)= W_\ell \boldeta_{\ell-1}(\x)$  and $\h_\ell(\x) = \1(\boldeta_\ell(\x) \geq 0) \circ \left( U_\ell \h_{\ell-1}(\x) \right) $ for $\ell \in \{1,\ldots,L-1\}$.
\label{thm:dlgn-theorem}
\end{thm}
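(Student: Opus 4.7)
The plan is to imitate the path-expansion style of the companion Theorem~\ref{thm:relu-theorem} for ReLU nets, exploiting the fact that in the DLGN the gating preactivations $\boldeta_\ell$ depend only on the $W$-parameters while the expert weights $U_\ell$ appear only in $g_\pi$. This decoupling makes the argument cleaner than in the ReLU case. Concretely, I would substitute Equations~\ref{eqn:DLGN-gating-model} and \ref{eqn:DLGN-individual-expert-model} into Equation~\ref{eqn:MoE} and write
\[
\widehat y(\x)=\sum_{i_1,\dots,i_{L-1}} \Bigl[\prod_{\ell=1}^{L-1}\1(\boldeta_{\ell,i_\ell}(\x)\geq 0)\Bigr]\, u_{L,1,i_{L-1}}\Bigl[\prod_{\ell=2}^{L-1} u_{\ell,i_\ell,i_{\ell-1}}\Bigr]\bu_{1,i_1}^\top\x,
\]
then factor this telescoping sum from the input side outward.

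The central step is an induction on $\ell\in\{1,\dots,L-1\}$ proving that the partial path sum
\[
h_{\ell,i_\ell}(\x)\;:=\;\sum_{i_1,\dots,i_{\ell-1}}\Bigl[\prod_{k=1}^{\ell}\1(\boldeta_{k,i_k}(\x)\geq 0)\Bigr]\Bigl[\prod_{k=2}^{\ell}u_{k,i_k,i_{k-1}}\Bigr]\bu_{1,i_1}^\top\x
\]
coincides with the $i_\ell$-th coordinate of the vector $\h_\ell(\x)$ defined in the theorem statement. The base case $\ell=1$ gives $h_{1,i_1}(\x)=\1(\boldeta_{1,i_1}(\x)\geq 0)\,\bu_{1,i_1}^\top\x$, matching $\1(\boldeta_1(\x)\geq 0)\circ(U_1\x)$ componentwise. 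For the inductive step, I would isolate the sum over $i_{\ell-1}$ and the factor $u_{\ell,i_\ell,i_{\ell-1}}$, pull the indicator $\1(\boldeta_{\ell,i_\ell}(\x)\geq 0)$ outside the inner sum (it does not depend on $i_1,\dots,i_{\ell-1}$), and use the inductive hypothesis to recognize the inner sum as $(U_\ell\h_{\ell-1}(\x))_{i_\ell}$. This yields exactly $h_{\ell,i_\ell}(\x)=\1(\boldeta_{\ell,i_\ell}(\x)\geq 0)\cdot(U_\ell\h_{\ell-1}(\x))_{i_\ell}$, i.e.\ the claimed Hadamard-product recursion.

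Finally I would peel off the outer sum over $i_{L-1}$: using $h_{L-1,i_{L-1}}(\x)=(\h_{L-1}(\x))_{i_{L-1}}$ and recalling that the row vector $U_L$ has entries $u_{L,1,i_{L-1}}$, the display for $\widehat y(\x)$ reduces to $\sum_{i_{L-1}}u_{L,1,i_{L-1}}\,(\h_{L-1}(\x))_{i_{L-1}}=U_L\h_{L-1}(\x)$, which is the desired equality. The only genuinely delicate point is the bookkeeping of multi-indices and the careful separation of the roles played by the gating matrices $W_\ell$ (which enter solely through the scalar indicators $\1(\boldeta_{\ell,i_\ell}\geq 0)$) and the expert matrices $U_\ell$ (which enter through the product of weights along the path); once that accounting is in place the algebra is straightforward, and no continuity or measurability issues arise because all sums are finite.
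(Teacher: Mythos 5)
Your proposal is correct and follows essentially the same route as the paper's own proof: an induction over layers showing that the partial path sums equal the coordinates of $\h_\ell(\x)$, followed by contracting with the row vector $U_L$. The only cosmetic difference is that the paper expands $\bu_{1,i_1}^\top\x$ as an explicit sum over an input index $i_0$, whereas you keep it as an inner product; the argument is otherwise identical.
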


The main advantage of the DLGN over the ReLU network lies in its much simpler gating network (when viewed in the MoE paradigm). It is always guaranteed that for any path $\pi$, its active region given by $A_\pi = \{\x \in \R^d : f_\pi(\x) = 1\}$ is an intersection of $L-1$ half-spaces, and hence a convex polyhedron. 


\subsection{Piecewise Constant DLGN}
As the expert model $g_\pi$ is linear in $\x$, and $f_\pi(\x)$ is equal to an indicator function for a convex polyhedron in $\R^d$, the total model $\widehat y(\x)$ is clearly still piece-wise linear. It is possible to simplify the model even further by replacing the linear function $g_\pi$ with a constant independent of $\x$. A simple way of accomplishing this is by setting $\h_0(\x)$ as any constant non-zero vector instead of $\x$. We set $\h_0$ as the all-ones vector as a convention. This has the same effect as replacing $\x$ in the RHS of Equation \ref{eqn:DLGN-individual-expert-model} by the all-ones vector. This variant is called the piece-wise-constant (PWC) DLGN  and its empirical performance on most datasets (in terms of test accuracy) is indistinguishable from the piece-wise-linear DLGN.  In all our experiments to illustrate the DLGN, we use DLGN-PWC due to its extra simplicity.


\begin{table*}[ht]
    \centering
\begin{tabular}{|c|c|c|c|c|}
\hline
& Gating model $f_\pi (\x)$ & Expert prediction model $g_\pi(\x)$  & Train Accuracy & Test Accuracy  \\ \hline
ReLU network    
& $\prod_{\ell=1}^{L-1} \1\left(\w_{\ell,i_\ell}^\top \h_{\ell-1}(\x)  \geq 0 \right)$ 
& $\left[ \prod_{\ell=2}^{L-1} w_{\ell,i_\ell,i_{\ell-1}}\right] w_{L,1,i_{L-1}} \w_{1,i_1}^\top \x$ 
& 96.3
& 72.17 \\
DLGN 
& $\prod_{\ell=1}^{L-1} \1\left( \boldeta_{\ell, i_\ell}(\x)  \geq 0 \right)$ 
& $\left[ \prod_{\ell=2}^{L-1} u_{\ell,i_\ell,i_{\ell-1}}\right] u_{L,1,i_{L-1}} \bu_{1,i_1}^\top \x$
& 85.3
& 73.2 \\
DLGN-PWC 
& $\prod_{\ell=1}^{L-1} \1\left( \boldeta_{\ell, i_\ell}(\x)  \geq 0 \right)$ 
& $\left[ \prod_{\ell=2}^{L-1} u_{\ell,i_\ell,i_{\ell-1}}\right] u_{L,1,i_{L-1}} \bu_{1,i_1}^\top \1$ 
& 84.9
& 72.2 \\
DLN
& $1$
& $\left[ \prod_{\ell=2}^{L-1} w_{\ell,i_\ell,i_{\ell-1}}\right] w_{L,1,i_{L-1}} \w_{1,i_1}^\top \x$ 
& 42.3
&40.79 \\ 
\hline
\end{tabular}

\caption{Mixture of simple expert formulations. Gating model $f_\pi$ and expert prediction model $g_\pi$ for a path $(i_1,i_2, \ldots, i_{L-1})$, on a deep network. The ReLU network and DLN have parameters $W_1, W_2, \ldots, W_L$. The DLGN have parameters $U_1,\ldots, U_L$ in addition to the $W$ parameters. The layer outputs for a ReLU and deep linear network are given by $\h_\ell(\x)$ and $\boldeta_\ell(\x)$. i.e. $\h_0(\x) = \boldeta_0(\x) = \x$ and $\h_\ell(\x) = \bphi(W_l \h_{\ell-1}(\x))$ and  $\boldeta_\ell(\x) = W_\ell \boldeta_{l-1}(\x)$. The train and test accuracy are shown on CIFAR-10 dataset with the architectures explained in detail in the appendix.}
\label{tab:MoE_formulations}
\end{table*}

A significant focus of this paper revolves around positioning DLGNs as a pivotal bridge between ReLU nets and deep linear networks, with a key emphasis on their interpretability. To further that claim, we need some empirical evidence that DLGNs can perform comparably to ReLU nets while being superior to static feature methods. Indeed, we find that in a simple experiment on CIFAR10 (refer Table \ref{tab:MoE_formulations}) with a simple 5 layer convolutional architecture, ReLU nets, DLGN and DLGN-PWC perform similarly, and outperform the linear model with empirical NTK features and deep linear networks by a wide margin. Our experimental results on CIFAR 10 and CIFAR 100 with the ResNet architecture demonstrates that performance of DLGNs is much better than linear models, and slightly worse than ReLU networks(comparison results to be found in the appendix).

\section{Active Path Regions and the Overlap Kernel}
\label{sec:active-path-regions}


In this section we build the tools required to study feature learning that happens in neural networks.

\subsection{Active Path Regions vs Activation Pattern Regions}

For a deep network defined using the MoE paradigm in Equation \ref{eqn:MoE}, we define two complementary objects. The set of paths $A(\x)$ active for a given input $\x$, and the set of inputs $A_\pi$ active for a given path $\pi$.

\begin{align*}
    A(\x) &= \{\pi \in \Pi : f_\pi(\x) = 1 \} \\
    A_\pi &= \{\x \in \R^d : f_\pi(\x) = 1 \}
\end{align*}

A related term that has been in use is the activation pattern region $A_\b$ for any given bit pattern $\b \in \{0,1\}^{(L-1)m}$. The Activation pattern region for ReLU nets is defined as follows: 
\begin{align*}
A_\b 
&= \{ \x \in \R^d : h_{\ell, i} (\x) > 0 , \forall \ell,i  \text{ s.t.} b_{\ell,i}=1,  \\
& ~~~~~~~~~ \text{ and } h_{\ell, i} (\x) \leq 0, \forall \ell \in [L-1],i \in [m] \text{ s.t.} b_{\ell,i}=0 \}
\end{align*} 
where $\h_\ell$ forms the activation output of layer $\ell$ as defined in Equation \ref{eqn:ReLU-net-gating-net}. Thus $A_\b$ corresponds to the subset of the input space for the activation pattern of the neurons matches $\b$ exactly. A similar definition is possible with DLGN as well, where $\h_\ell$ is replaced with $\boldeta_\ell$ as defined in Equation \ref{eqn:DLGN-gating-model}. 

These activation pattern regions have been the focus of several papers that identify that the activation patterns partition the input space into convex polyhedra, and that the ReLU net is linear inside each of these regions \cite{Hanin+19a, Hanin+19b}. The active path regions $A_\pi$ can also be viewed as a union of several (precisely $2^{(m-1)*(L-1)}$) activation pattern regions $A_\b$. 

The number of activation pattern regions for ReLU nets can be exponential in the number of neurons. Most results in representation theory of neural networks typically  \cite{Telgarsky+16, Montufar+14} exploit this exponential blow-up of activation pattern regions to construct intricate highly symmetric and fast varying functions that cannot be represented by shallow networks. However, it has been proven that a randomly initialised neural net has only polynomially many activation regions, and this number also has been observed empirically to not change drastically during training \cite{Hanin+19b}. This questions the utility of studying activation pattern regions in practical learning problems. 



We argue that active path regions are a better tool to  study feature learning than activation pattern regions because they are lesser in number ( $m^L$ vs $2^{mL}$) and have simpler structures (DLGN active path regions have $L$ faces as opposed to activation pattern regions having $mL$ faces). While active path regions are suited for studying both ReLU nets and DLGNs, they are especially apt for DLGNs because active path regions in DLGNs correspond to convex polyhedra.

\subsection{The Overlap Kernel}

An important consequence of the mixture of experts reformulation of neural nets is that we get a natural feature vector for a given data point: the set of paths active for a given data point. This leads to a natural kernel that we call the overlap kernel $\Lambda$ (a la \cite{LakshmiSingh20}). 


\begin{equation*}
\Lambda(\x, \x') = |A(\x) \cap A(\x')| = \sum_{\pi \in \Pi} f_\pi(\x) f_\pi(\x') 
\end{equation*}

Due to the Cartesian product structure of the set of paths the above expression can be simplified easily. For the case of ReLU nets this reduces to
\[
\Lambda(\x, \x') = \prod_{l=1}^{L-1}  \sum_{i=1}^m \left(\1(h_{\ell,i}(\x) > 0)\cdot \1(h_{\ell,i}(\x') > 0) \right).
\]

Consider the learning problem where the simple experts $g_\pi$ are all restricted to constant values, and can be independently optimised. The learning problem for $g_\pi$ with fixed gating functions is equivalent to learning a linear model with inner product kernel given by the overlap kernel. Hence, studying the overlap kernel at different points during training gives valuable insights on the features being learnt by the model.

\begin{figure*}[ht]
    \begin{minipage}{0.3\textwidth}
    \includegraphics[width=\linewidth]{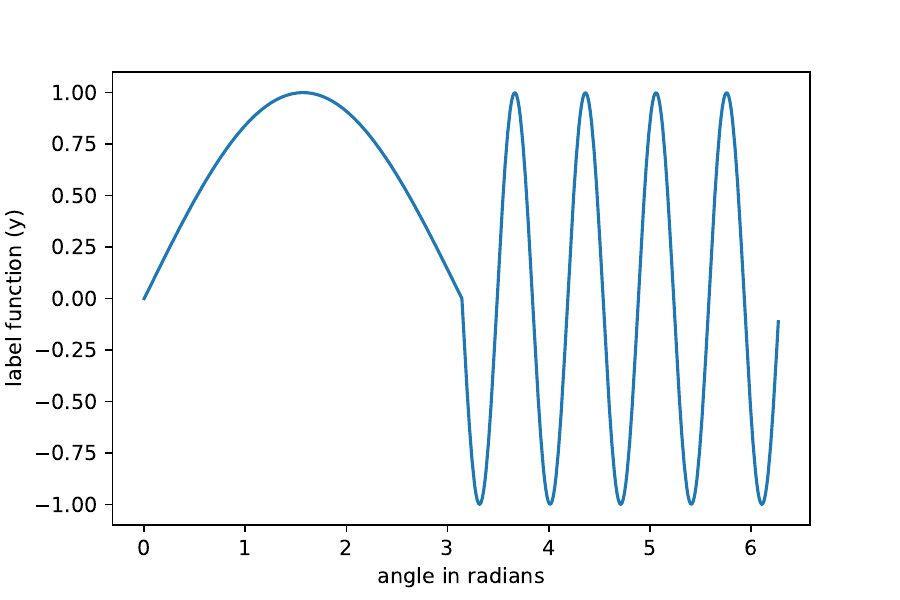}\par 
    \includegraphics[width=\linewidth]{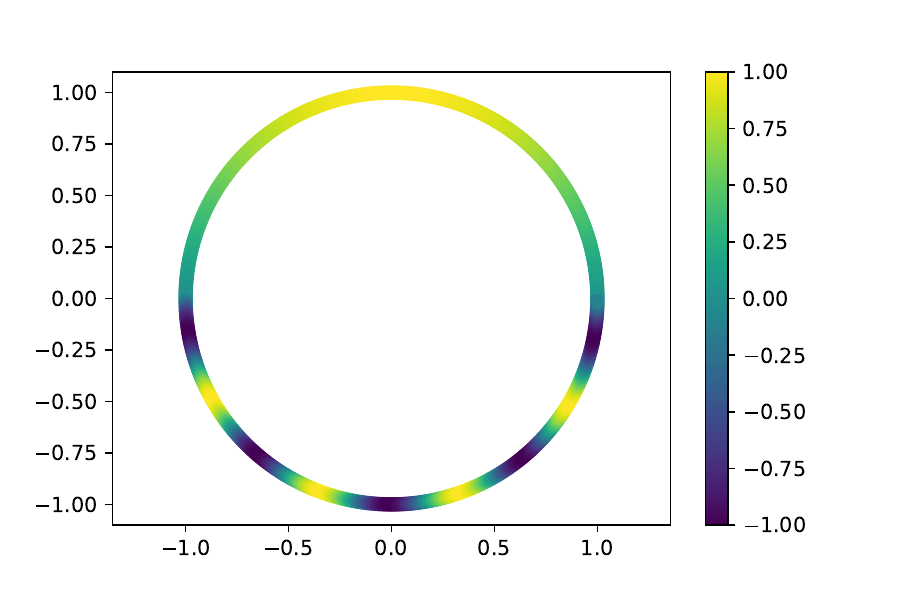}\par    
    \end{minipage}
    \begin{minipage}{0.69\textwidth}
     \includegraphics[width=\linewidth]{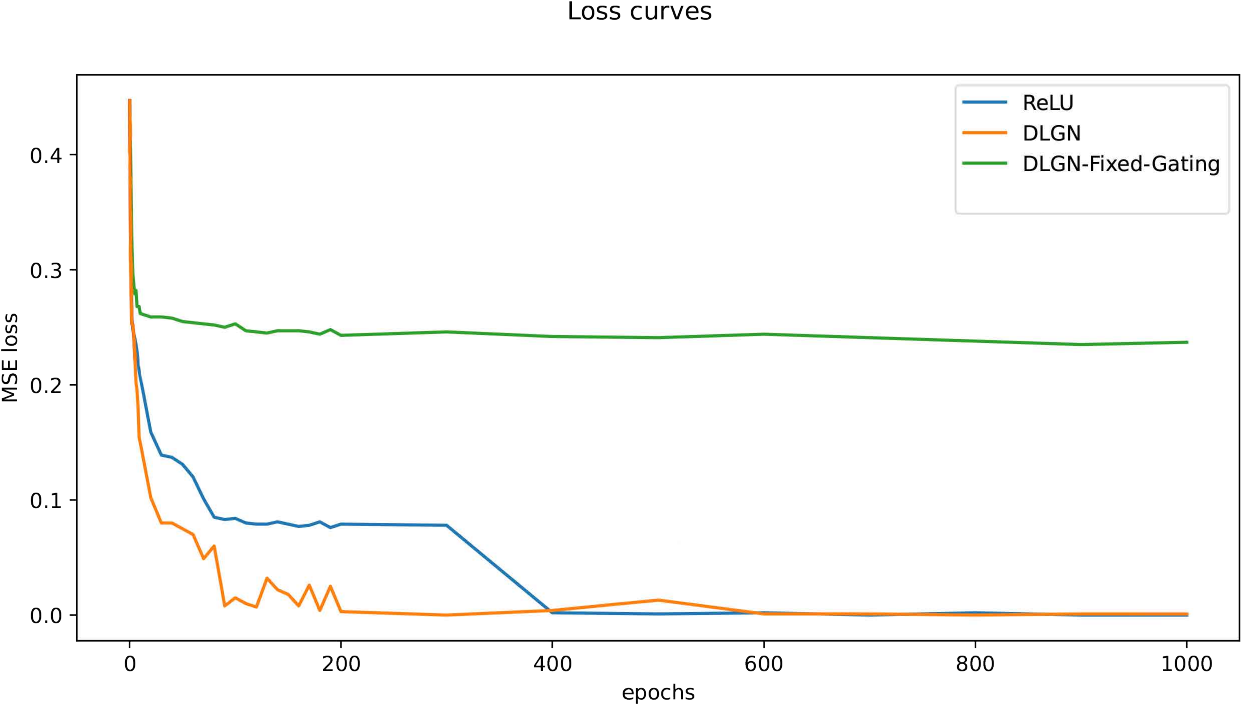}\par
\end{minipage}
    \caption{Left top: Target label $y$ as a function of the angles that input $\x$ makes. Left Bottom: A scatter plot of the data with points colored according to the target $y$. Right: The training loss of ReLU and DLGN models. DLGN models with fixed gating take more than 2000 iterations to converge to loss (~0.01) and are cropped here.}
    \label{fig:circle_dataset_summary}
\end{figure*}
\section{Neural Network Feature Learning}
\label{sec:neural-net-feature-learning}

In this section we use the tools built in Sections \ref{sec:MoE} and \ref{sec:active-path-regions} to illustrate the mechanism of feature learning in Neural networks which will help shed some light on interpretability of neural networks. Most of the results we illustrate here are on a simple 2-dimensional regression dataset, where the instances $\x$ are restricted to lie on a unit circle and the true labelling function $y(\x)$ varies with high frequency on one half of the circle and with lower frequency on the other half. Figure \ref{fig:circle_dataset_summary} gives an illustration of the dataset, and the squared loss of all the algorithms trained on the dataset as a function of the epochs.


The algorithms analysed are a simple 5-layer ReLU net (with 16 neurons per layer), a DLGN, and a DLGN with a fixed gating model $f_\pi$ that is randomly initialised. It is immediately obvious that the DLGN with a fixed gating model is incapable of learning features and has a difficult time optimising the training objective. We illustrate the learned features of the ReLU net and DLGN at different points during training via the overlap kernel.

\begin{figure*}
    \centering
    \includegraphics[width=0.15\textwidth]{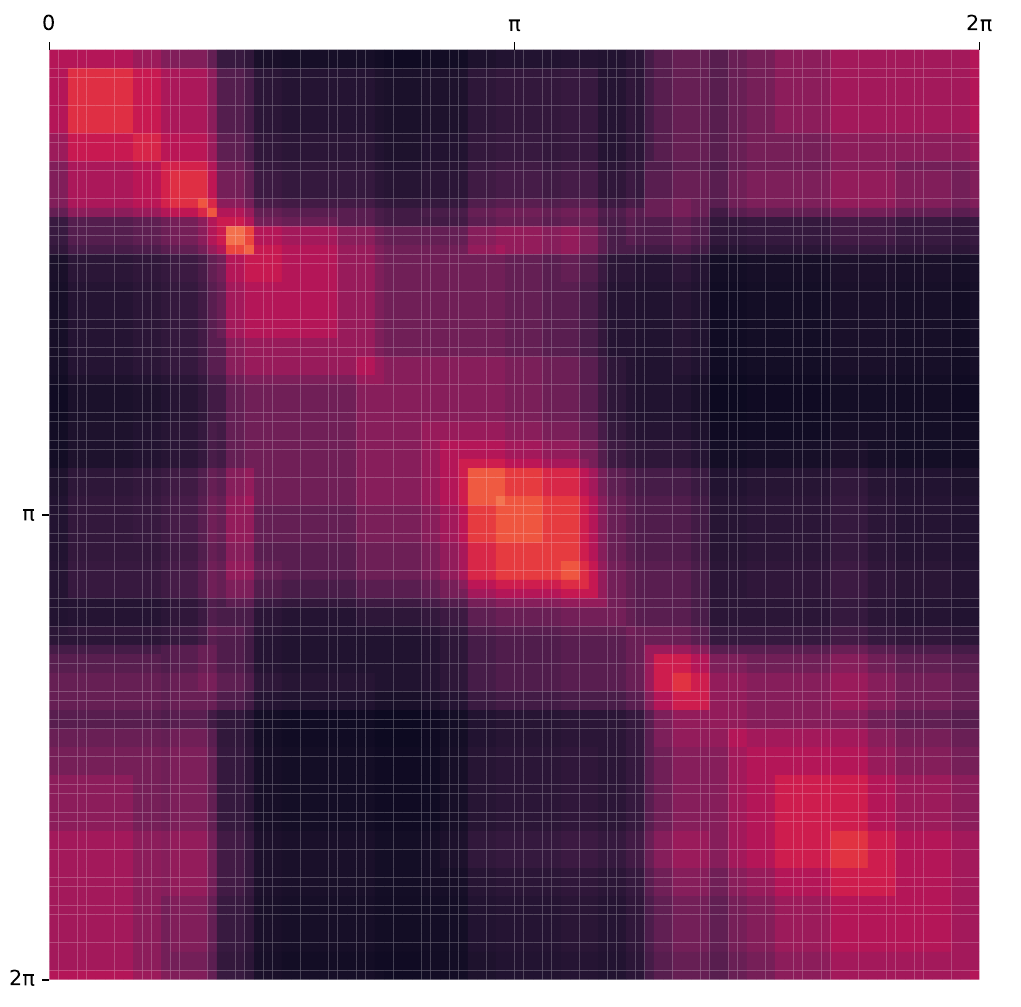}
    \includegraphics[width=0.15\textwidth]{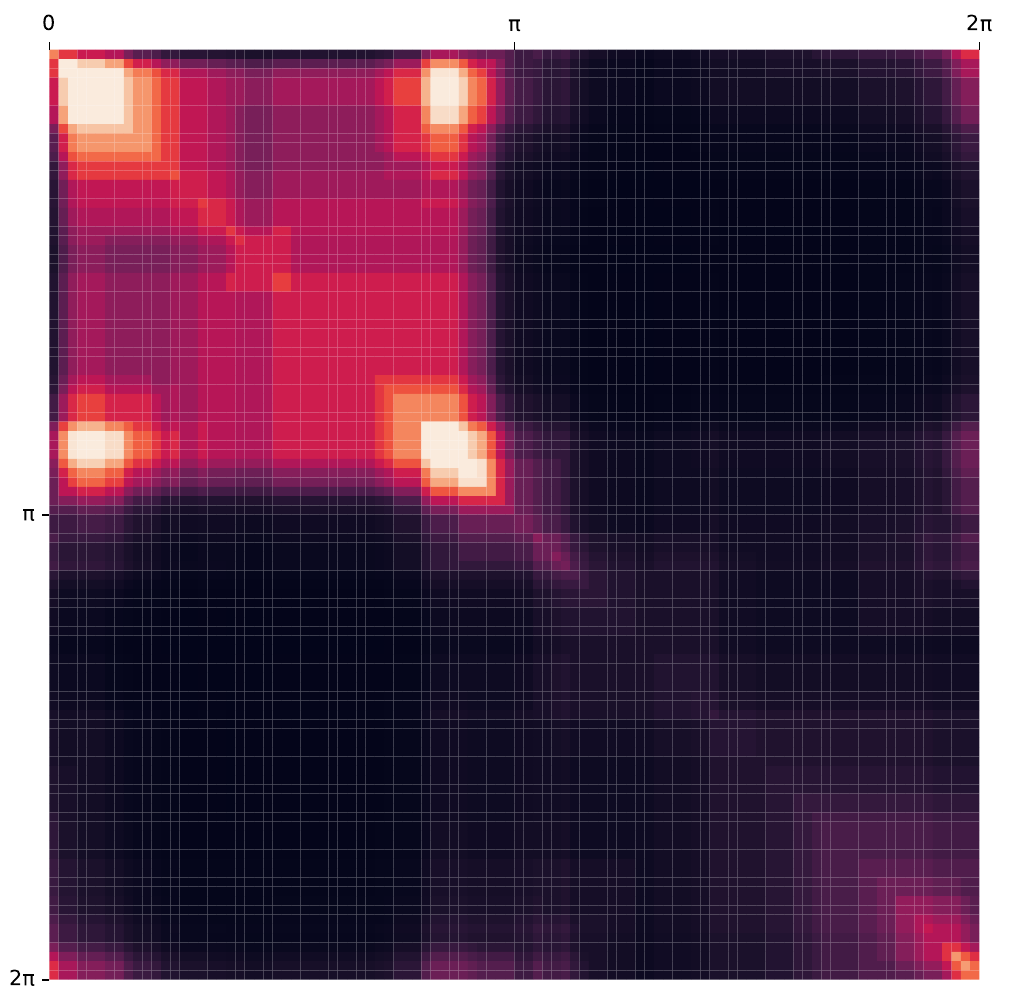}
    \includegraphics[width=0.18\textwidth]{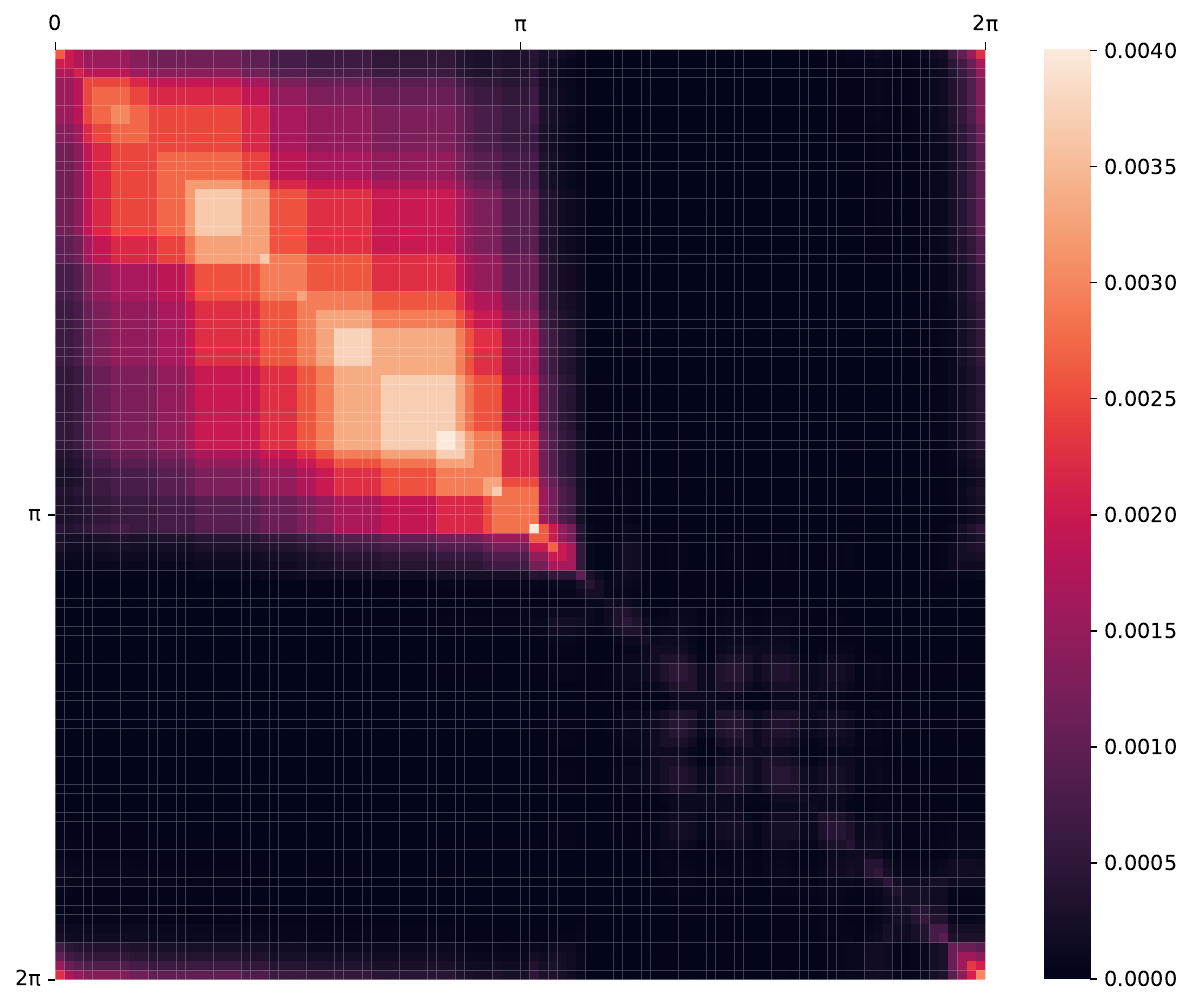}
    \includegraphics[width=0.15\textwidth]{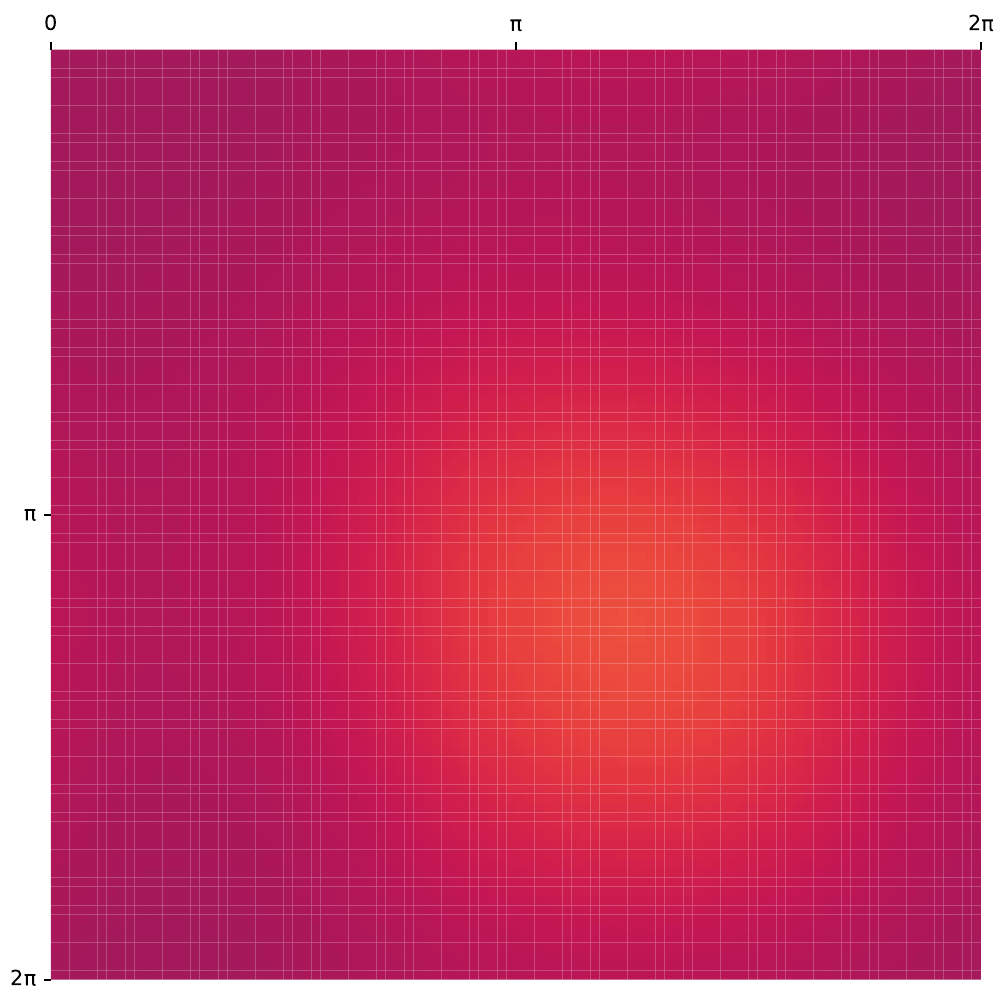}
    \includegraphics[width=0.15\textwidth]{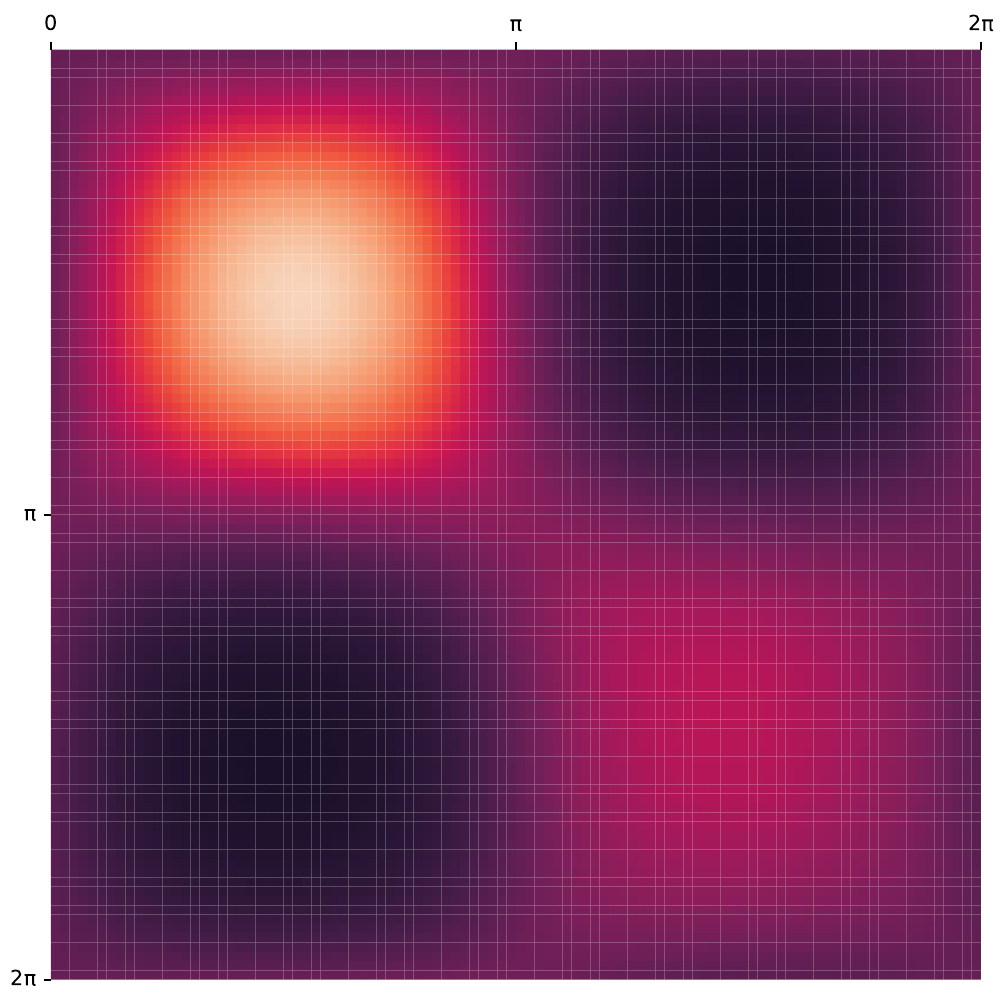}
    \includegraphics[width=0.18\textwidth]{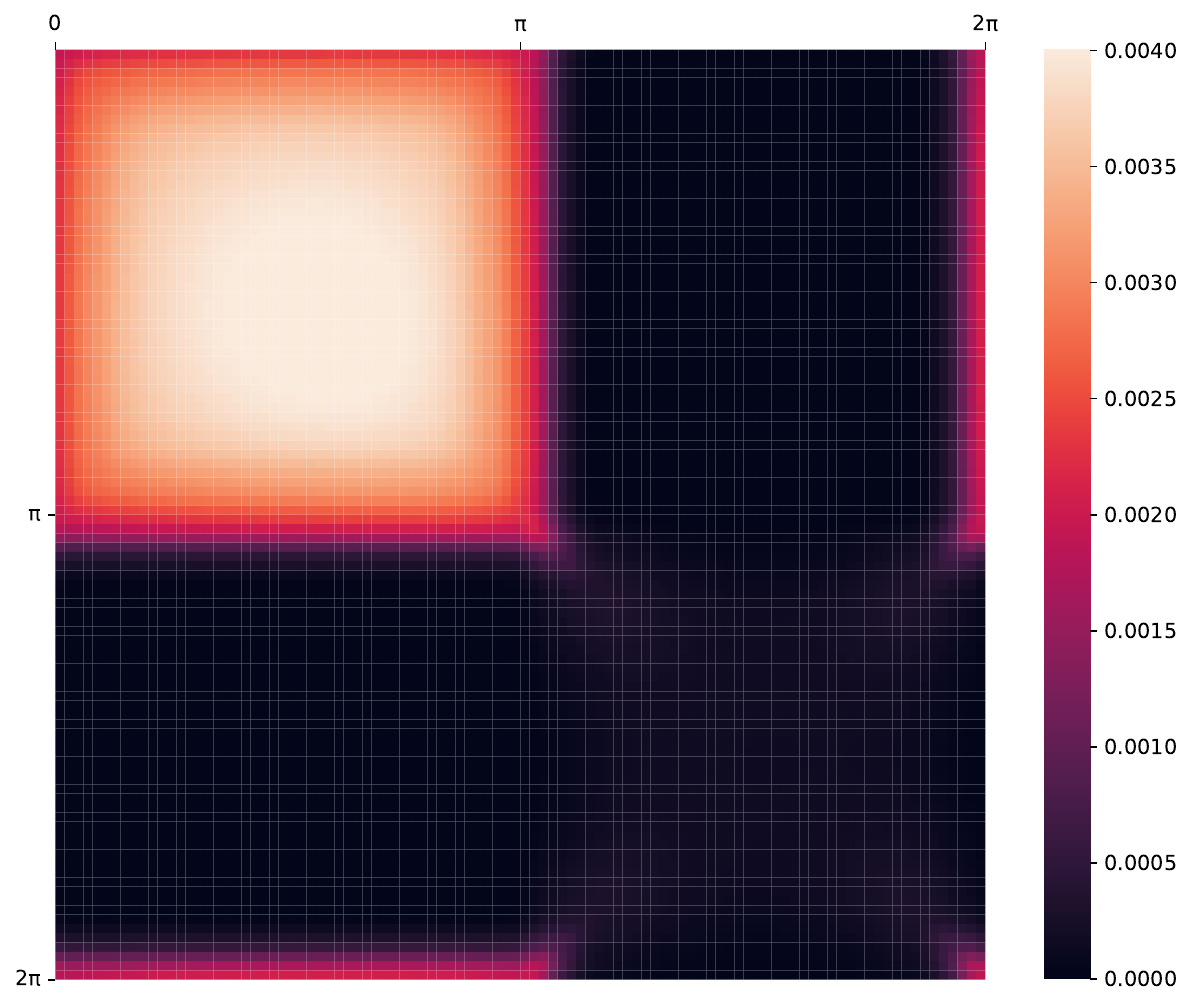}
    \caption{The (trace-normalised) overlap kernel for the ReLU net (left three images) and DLGN (right three images) at initialization (left), epoch 3 (middle) and epoch 200 (right). The data points are ordered based on angle -- first half corresponds to data in top half of the circle.}
    \label{fig:overlap_kernel_ReLU_DLGN}
\end{figure*}

\begin{figure*}
    \centering
    \includegraphics[width=0.14\textwidth]{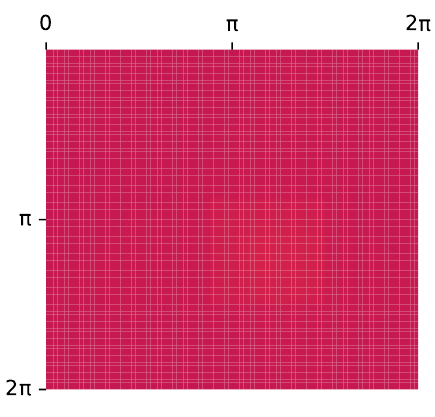}
    \includegraphics[width=0.14\textwidth]{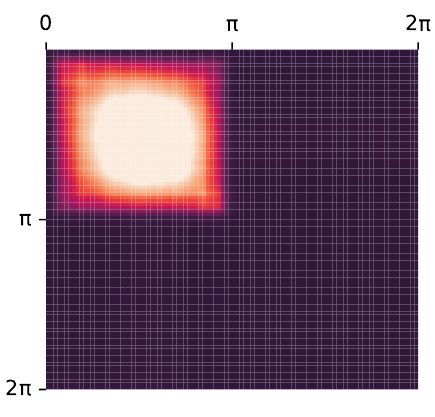}
    \includegraphics[width=0.19\textwidth]{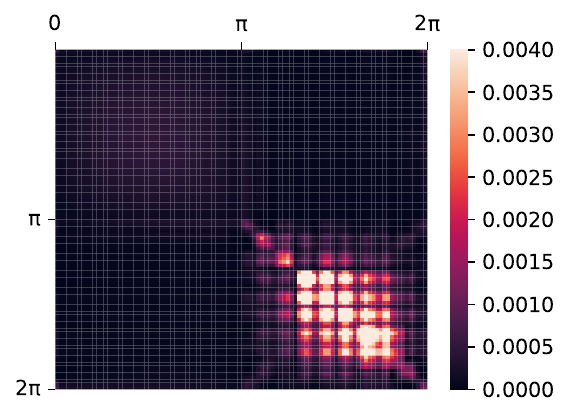}
    \includegraphics[width=0.14\textwidth]{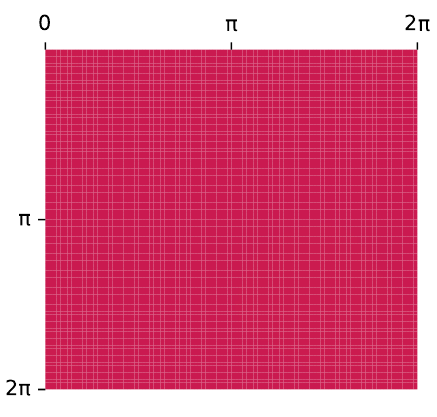}
    \includegraphics[width=0.14\textwidth]{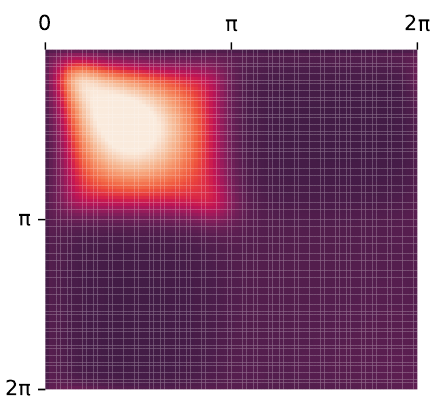}
    \includegraphics[width=0.19\textwidth]{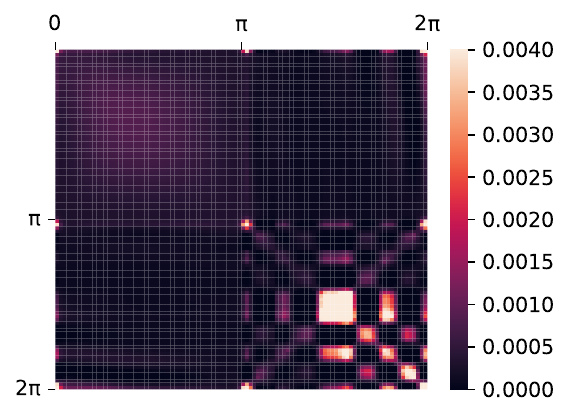}    
    \caption{The (trace-normalised) empirical Neural Tangent Kernel for the ReLU net (left three images) and DLGN (right three images) visualised as a matrix at initialization (left), epoch 3 (middle) and epoch 200 (right)}
    \label{fig:NTK_ReLU_DLGN}
\end{figure*}
\begin{figure*}
    \centering
    \includegraphics[width=0.32\textwidth]{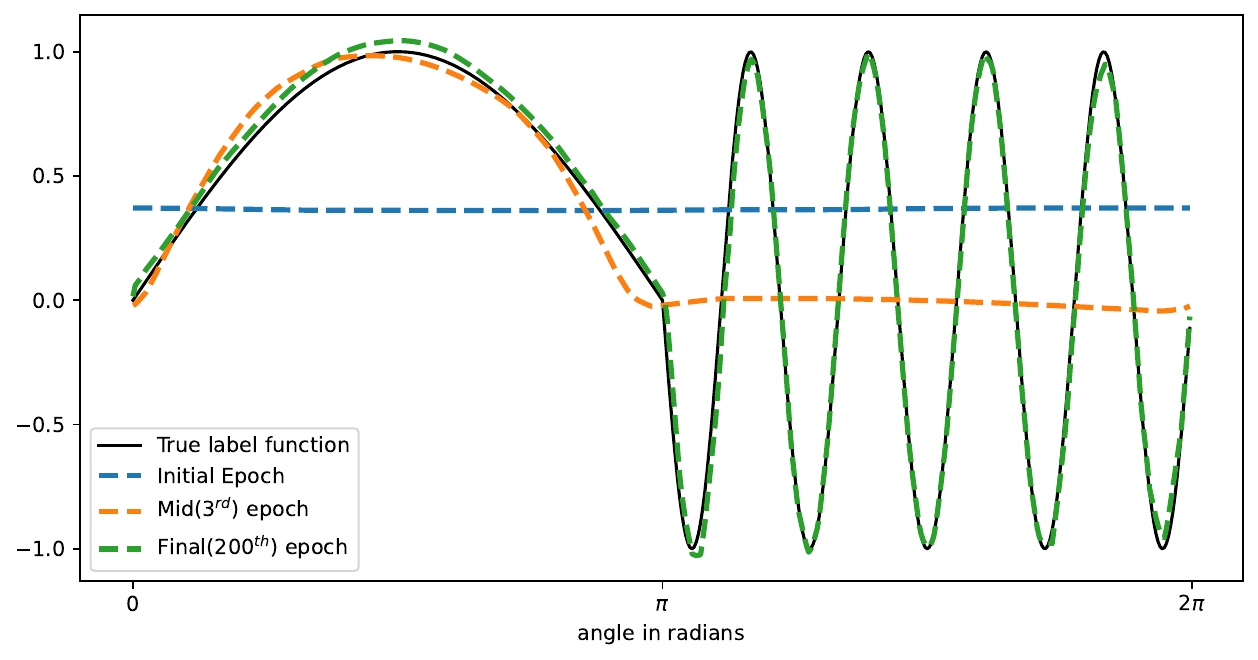}
    \includegraphics[width=0.32\textwidth]{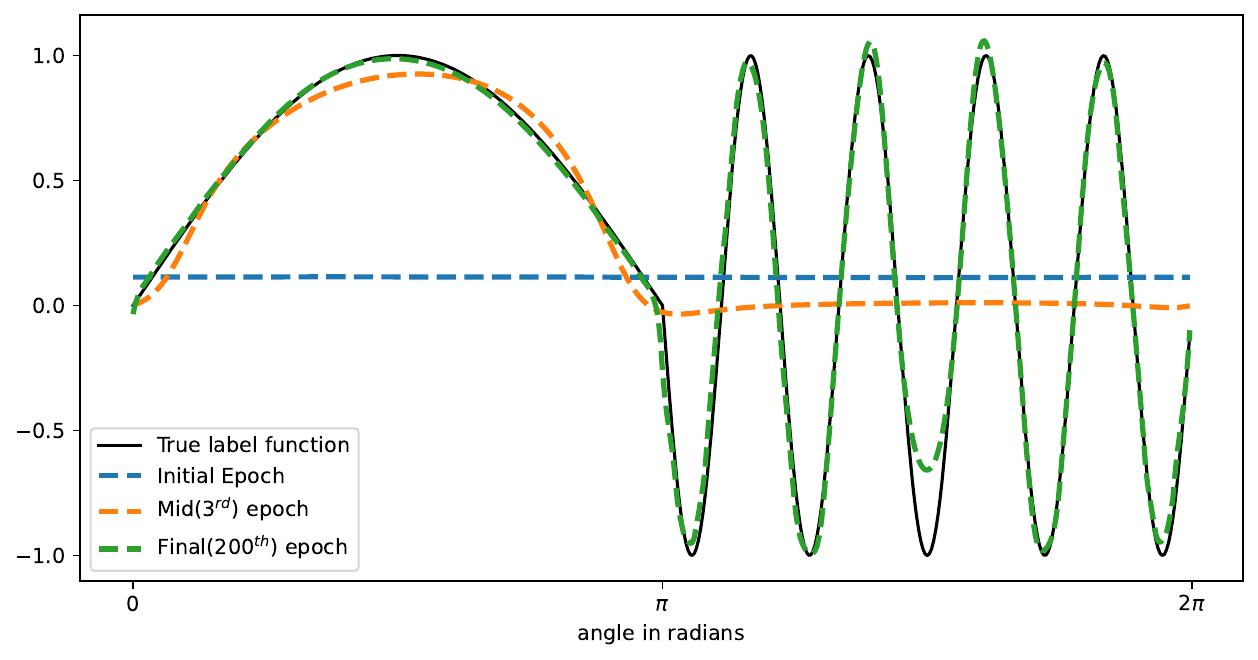}
    \includegraphics[width=0.32\textwidth]{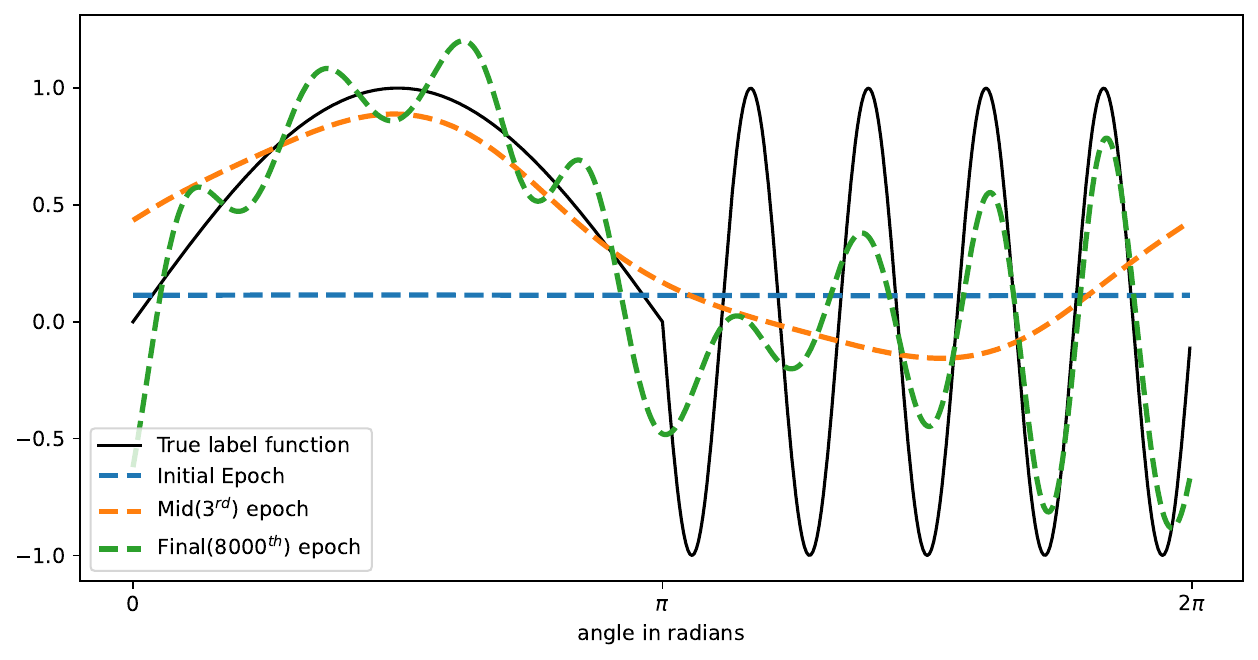}
    \caption{The model outputs of the ReLU net (left), DLGN (middle) and DLGN with the gating model $f_\pi$ frozen (right)  at 3 different epochs. }
    \label{fig:preds}
\end{figure*}
\begin{figure*}
    \centering
    \includegraphics[width=\textwidth]{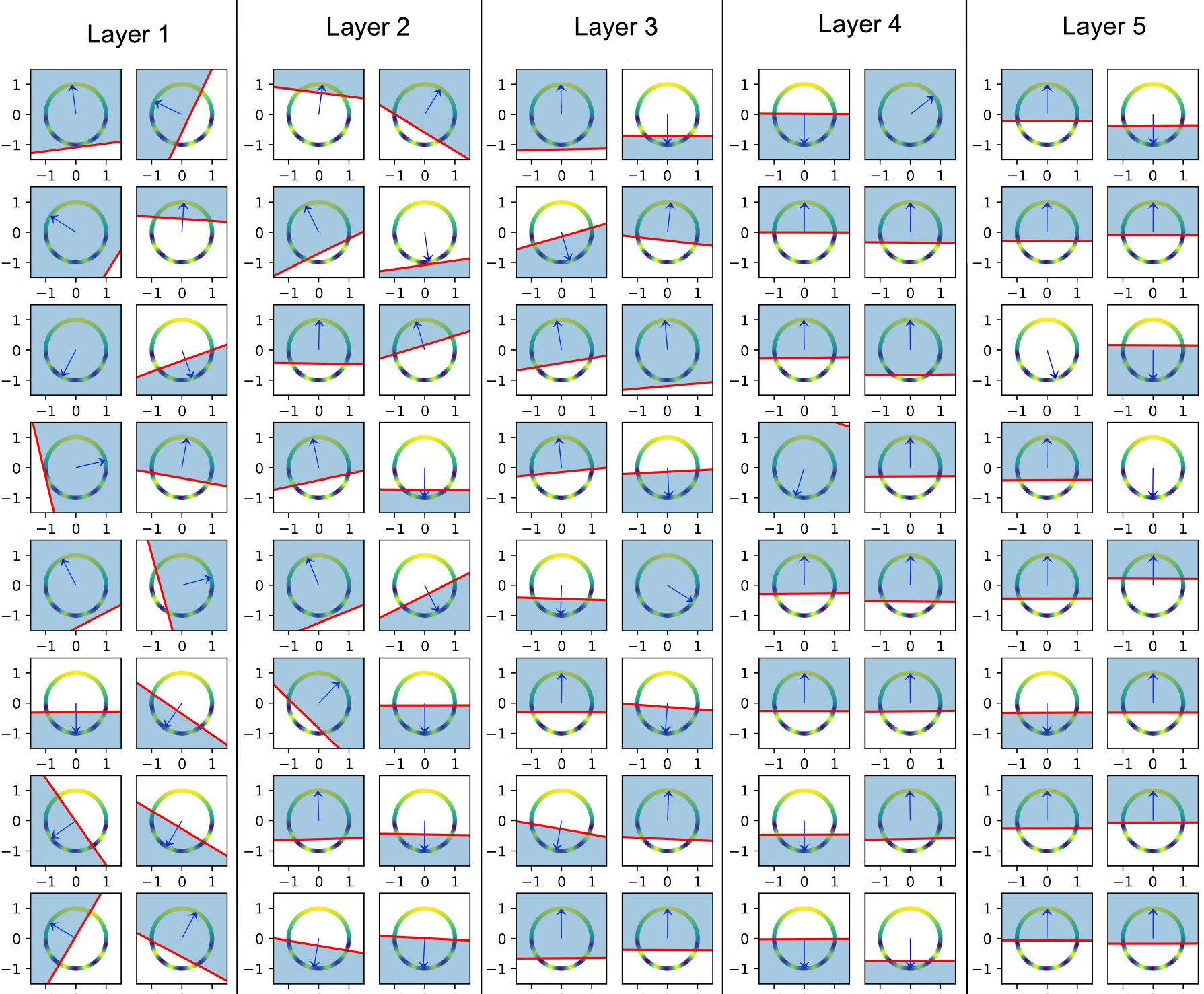}
    \caption{DLGN active regions for a subset of neurons in each layer at end of training}
    \label{fig:DLGN_active_regions_final}
\end{figure*}
    
\begin{figure*}[t]
    \centering
    \includegraphics[width=0.3\textwidth]{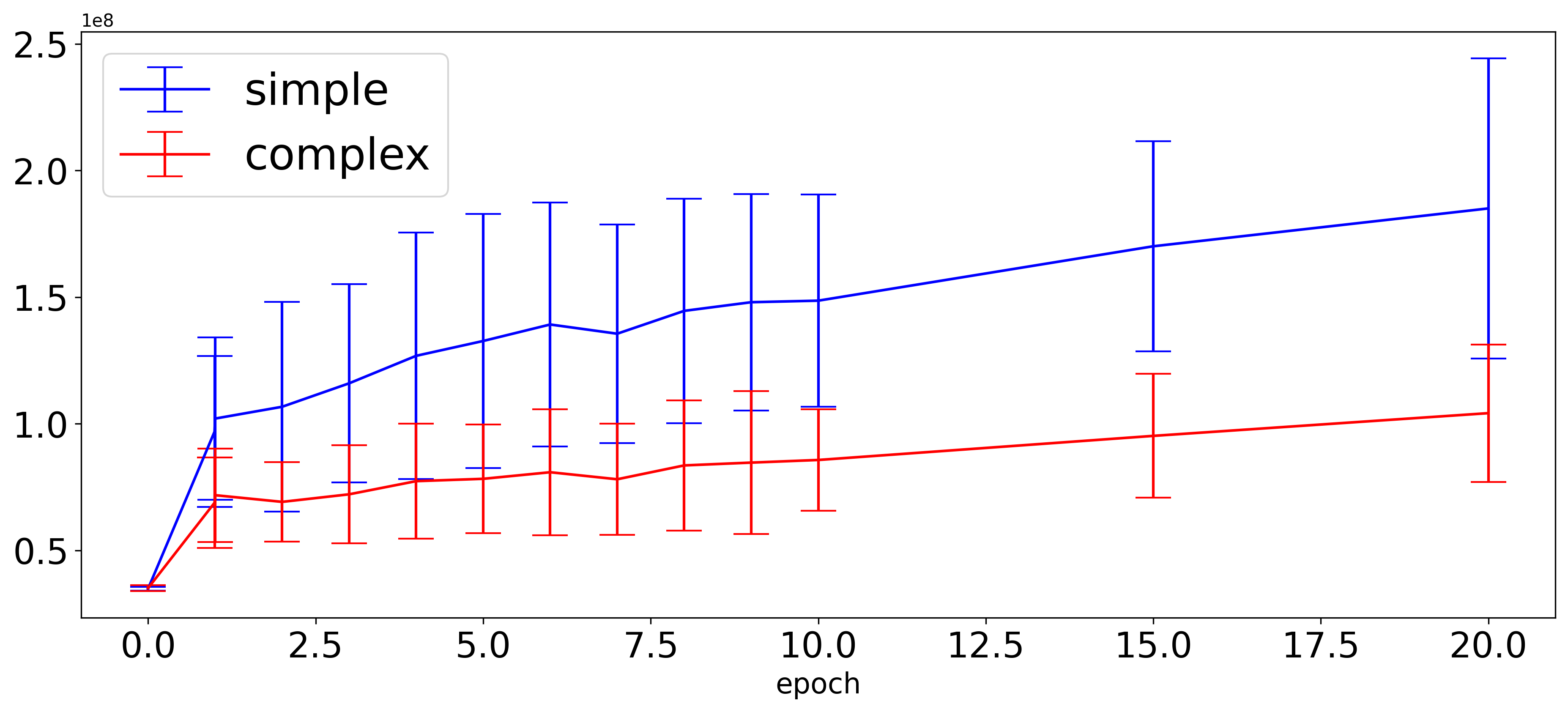}
    \includegraphics[width=0.3\textwidth]{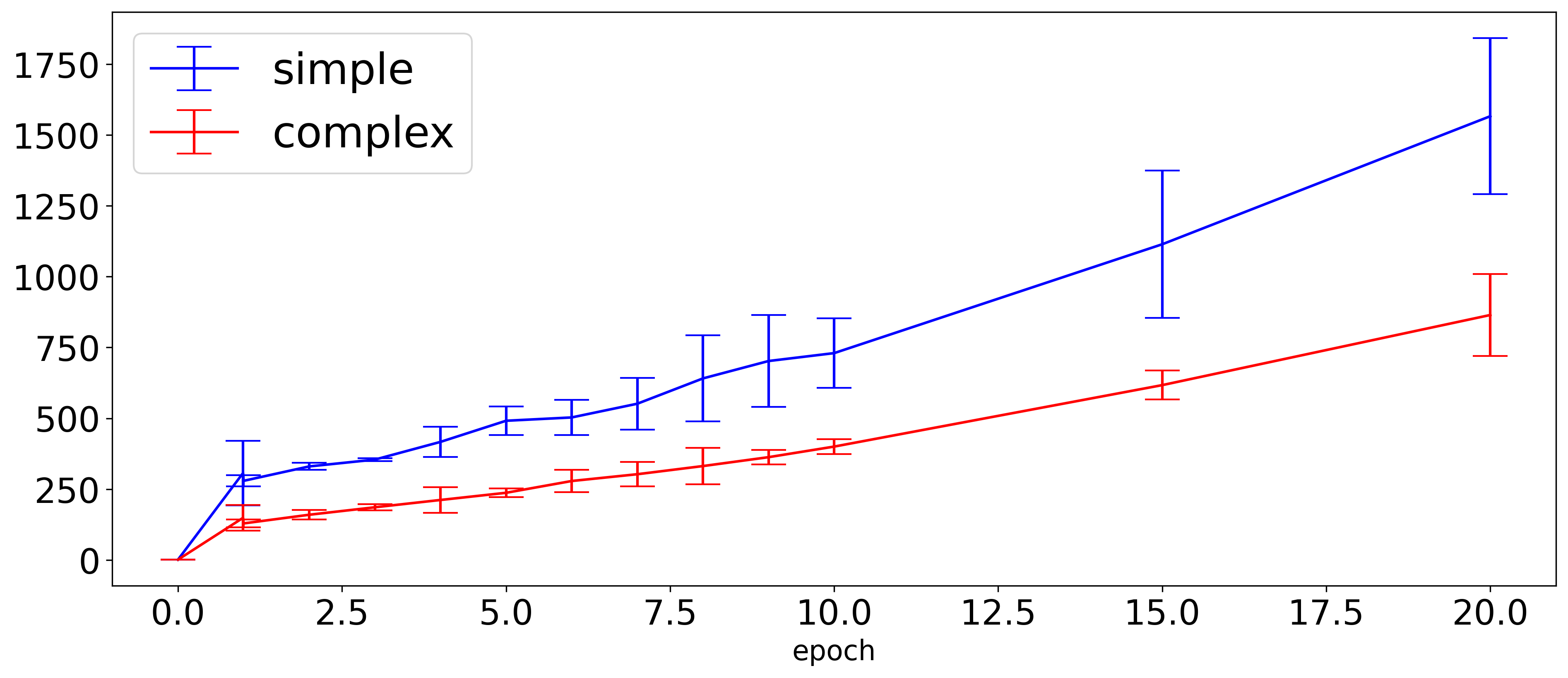}
    \includegraphics[width=0.3\textwidth]{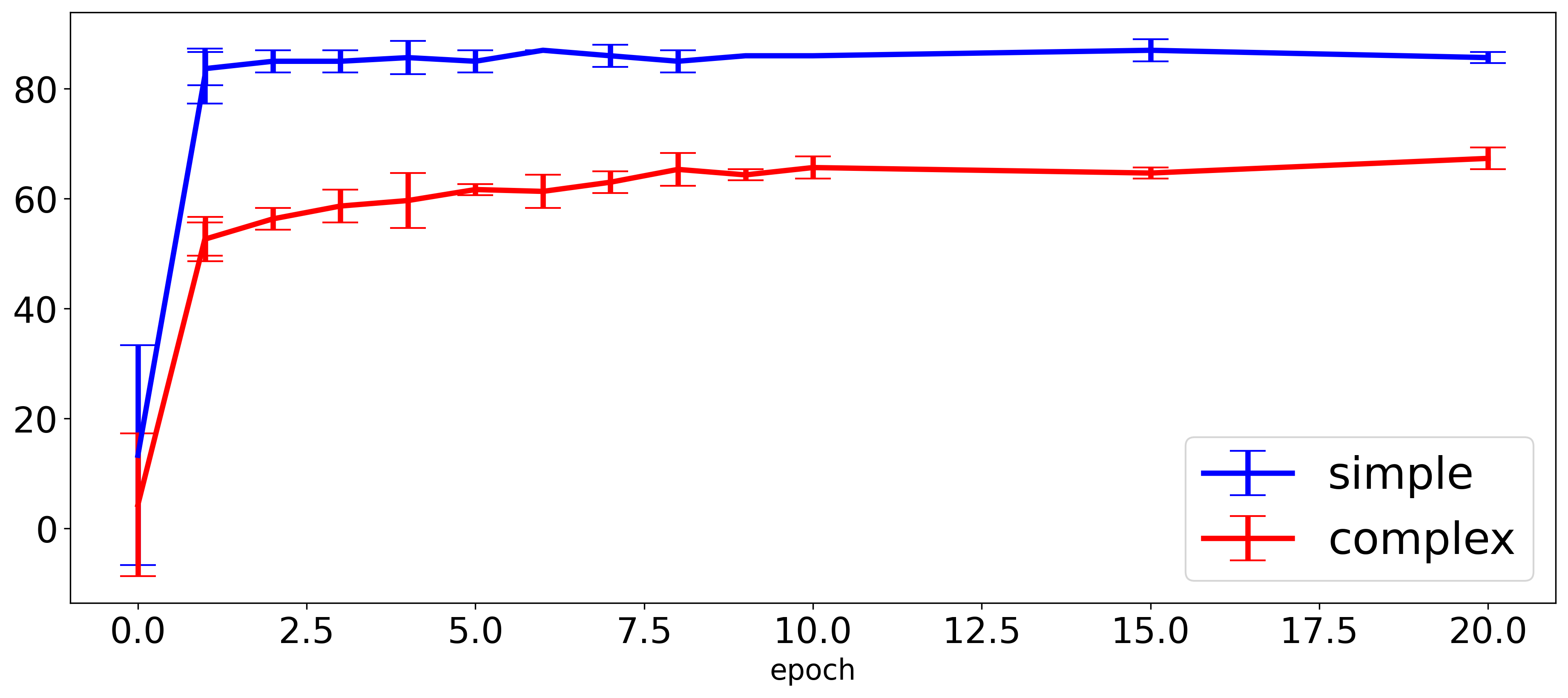}
    \caption{DLGN Results on Modified Fashion MNIST (Left) The average number of paths active on the simple and complex regions of input space. (Middle) Average value of the diagonal entry of the empirical NTK. (Right) Test accuracy. }
    \label{fig:DLGN_FMNIST}
\end{figure*}

The patterns revealed by the overlap kernel (See Figure \ref{fig:overlap_kernel_ReLU_DLGN}) show an interesting structure for both ReLU nets and DLGNs. In both cases, the number of paths active for the top half of the circle (corresponding to the lower frequency region for the target function) becomes much larger than the number of paths active for the other half. In both cases this broad structure is visible very early in the training (at the point where the training loss has only reduced from $0.5$ to $0.25$), and it stays more or less the same till the end of training (when the loss goes below $0.01$). This feature learning is indeed very beneficial for learning, as the version of DLGN where the gating model $f_\pi$ is fixed to the random initialisation has much worse convergence (it takes more than 2000 epochs as opposed to only about 200 epochs for ReLU nets and DLGN). 

Note that this is distinct from the phenomenon of spectral bias in neural nets (e.g. \cite{Basri+19}) which says: \textit{fixed infinite width NTK has low frequency sinusoids as eigen vectors with larger eigen values, and hence gradient descent prioritises learning low-frequency components of the target function}. The phenomenon we observe here happens when the (empirical) neural tangent kernel is far from being stationary -- See Figure \ref{fig:NTK_ReLU_DLGN}. A fundamental question in the field asks : \textit{when are neural networks better than fixed kernel machines?}. The standard answer for this is usually along the lines of -- \textit{the empirical NTK changes during training to become better adapted to the task at hand} \cite{Atanasov+22, Baratin+21, Fort+20}.  This does seem to happen, but the mechanism by which this can happen is largely unknown except for a few special cases where the target function is an arbitrary function of a one-dimensional projection of the input $\x$ (\cite{Damian22, Shi22, Daniely20}). Gradient descent (or any first order method) at any given iteration simply moves parameters in accordance with the current NTK features -- no force propels gradient descent to make the NTK itself better-suited for the task.  The overlap kernel gives us a plausible mechanism behind the change of NTK during training. We hypothesize that the active path regions change during training thus causing the NTK to change. 

Figure \ref{fig:preds} gives the learned model at different iterations and shows that the model learns the lower frequency half of the input space in just a few epochs, while the other half takes more than 100 epochs. An interesting result happens when the gating model of a DLGN is fixed at initialisation, under this setting no feature learning can happen as the active path regions are fixed. This causes slow convergence, and the learned model has high frequency artifacts on the upper half of the input space even after 8000 epochs. In our opinion, this proves that feature learning does happen in neural nets and is an essential requirement for efficient learning. Leading to the next question of - \textit{how are these features/path active regions learnt during training?}

Answering the above question is unfortunately extremely complicated for ReLU nets due to their complicated path active regions. DLGNs on the other hand have compactly representable active path regions. While analysing a new architecture like DLGNs seems to have less utility, we appeal to the closeness of performance of DLGNs with ReLU nets on small datasets like CIFAR10, and the similarity of overlap/NTK/prediction model dynamics (Figures \ref{fig:overlap_kernel_ReLU_DLGN}-\ref{fig:preds}) to argue that DLGNs form a faithful surrogate for ReLU nets and can be placed mid-way between ReLU nets and Deep linear networks. They can capture non-linear feature evolution unlike DLNs. while still being much more tractable than ReLU nets. 

As each neuron in DLGN is active exactly in a half-space, this allows for a simple way to visualise the neurons. In Figure \ref{fig:DLGN_active_regions_final} we visualise a representative subset (4 out of 16) of neurons in all 5 layers after training respectively. (The appendix contains the full set of activation regions including the neurons at initialisation). This immediately reveals the reason behind the difference in number of active paths in the upper and lower halves of the input space (the top right of Figure \ref{fig:overlap_kernel_ReLU_DLGN}(c)  is much brighter than the bottom left). A significant fraction of neurons (especially in the last layer) all choose to become active in the top half-space which contains the lower frequency part of the target function (3 out of 4 activation regions in layer 5 neurons of the trained DLGNs are approximately the top half of the input-space -- see Figure \ref{fig:DLGN_active_regions_final}). This observation underscores the interpretability of DLGNs as they demonstrate a preference for specific regions in the input space, particularly in capturing the lower-frequency features of the target function which is not possible to visualise for ReLU networks.

\section{Gradient Descent as a Resource Allocator}
\label{sec:GD-resource}
The viewpoint of neural networks as mixture of experts with path number of experts allows for a natural viewpoint where paths are resources. Each path specializes in one part of the input space -- for DLGNs this would be the intersection of half-spaces corresponding to its neurons.  The main goal in training then is to allocate paths to different regions of the input space based on the extra value it adds to the current aggregate model. For example, in the simple 2-dimensional regression problem discussed in the previous section, most paths specialised themselves to focus on the top half region of the input space. This happens automatically as a consequence of gradient descent. 

This leads us to make one of our main conjectures regarding feature learning in neural networks -- the activation regions of the experts/paths move preferentially to cover low-frequency/simpler regions of the target function in the input space. 

We provide further validation of the above hypothesis based on a variant of the Fashion MNIST(\cite{xiao17}) classification task. The images are used as-is. The labels of images from the first five classes (\texttt{T-shirt} to \texttt{Coat}) were assigned labels 1 through 5. The labels of the images from the next 5 classes (\texttt{Sandal} to \texttt{ankle boot}) are modified as follows. The images within each of these classes were sub-divided further into 5 groups based on a type of clustering and the class labels from 6 to 10 were randomly assigned to these 25 clusters. The net effect of this modification is that the `true' labelling function is much more complex in the input space corresponding to \texttt{Sandal} to \texttt{ankle boot}, than the space corresponding to \texttt{T-shirt} to \texttt{Coat} images.

Figure \ref{fig:DLGN_FMNIST} supports our conjecture -- the number of paths allocated to the simple part of the input space is consistently higher than the complex part. We argue that this is an innate weakness of gradient descent that allocates more resources for an easier job. This is another facet of the simplicity bias of gradient descent that is not captured by spectral arguments on the NTK, as the NTK is far from constant during the training process (Further details of these experiments are given  in the appendix).

Investigating this defect of gradient descent, and workarounds using some other architecture aware optimisation algorithm is an interesting direction of future work.

\section{Discussion and Conclusion}

In this paper, we framed the neural network model as a mixture of a large number of simple experts and put both ReLU nets and Deep Linearly Gated Networks in the same category of models. We introduced novel tools to study feature learning in neural nets, and made the claim that features in neural nets correspond to paths through the network and they represent intersections of half-spaces in the input space for DLGNs. We studied feature learning on a simple synthetic task and identified an important phenomenon of most neurons taking responsibility for the `easy' region of the input space. This intriguing observation was vividly visualized in DLGNs, providing a compelling argument for their enhanced interpretability compared to ReLU networks. Studying these phenomena further is an exciting direction for further research.


\bibliography{neural_nets_theory_phenomenology}



\newcommand{\summ}[2]{\sum_{#1}^{#2}}
\onecolumn

\section*{\centering {Appendix}}
\setcounter{section}{0}
\section{DLGN Illustration}
\begin{figure*}[ht]
    \centering
    \includegraphics[scale = .9,width=0.45\textwidth]{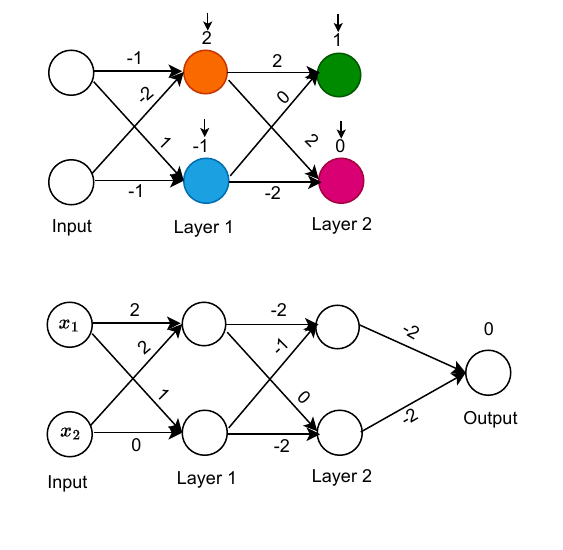}
    \includegraphics[scale = .9,width=0.45\textwidth]{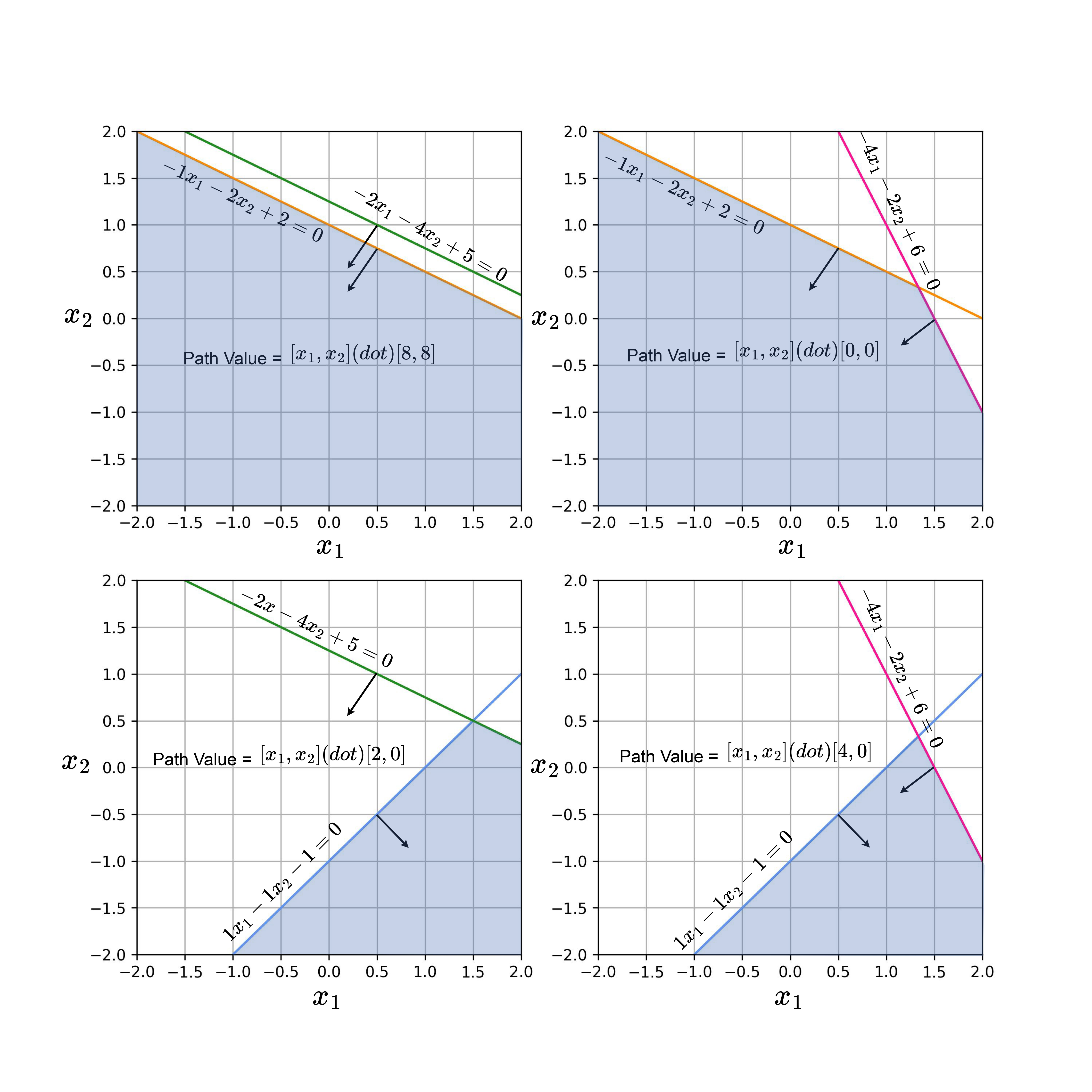}
    \caption{Left: DLGN network with the gating model above and simple expert model below. Right: The four paths are each active in a different region of the input space given by the intersection of the appropriately coloured half-spaces. The value of each path is either a linear function of the input (for vanilla-DLGN), or a constant (for DLGN-PWC, in which case the $[x_1,x_2]$ in the value expression are replaced by $[1,1]$). }
    \label{fig:DLGN_paths}
\end{figure*}
\section{Comaprison results between DLGN and ReLU}
Following is the result for comparison between the performance between the DLGN and ReLU for standard architecture and datasets.
\begin{table*}[ht]
  \centering
  \begin{tabular}{|c|c|c|c|c|}
    \hline
    \multicolumn{1}{|c|}{Architecture} & \multicolumn{2}{c|}{CIFAR10} & \multicolumn{2}{c|}{CIFAR100}\\
    \cline{2-5}
    & Train Accuracy & Test Accuracy & Train Accuracy & Test Accuracy \\
    \hline
    Resnet 34(ReLU) & \multicolumn{1}{c|}{100.0\%} & \multicolumn{1}{c|}{91\%} & \multicolumn{1}{c|}{100\%} & \multicolumn{1}{c|}{56.68\%} \\
    \hline
    Resnet 34(DLGN) & \multicolumn{1}{c|}{100.0\%} & \multicolumn{1}{c|}{86\%} & \multicolumn{1}{c|}{100\%} & \multicolumn{1}{c|}{51.6\%}  \\
    \hline
    Resnet 110(ReLU) & \multicolumn{1}{c|}{100.0\%} & \multicolumn{1}{c|}{94\%} & \multicolumn{1}{c|}{100\%} & \multicolumn{1}{c|}{73\%} \\
    \hline
    Resnet 110(DLGN) & \multicolumn{1}{c|}{100.0\%} & \multicolumn{1}{c|}{89\%} & \multicolumn{1}{c|}{100\%} & \multicolumn{1}{c|}{67\%}  \\
    \hline
  \end{tabular}
  \caption{Experimental Results for DLGN and ReLU nets with ResNet architecture on CIFAR10 and CIFAR100}
\label{tab:resnet_comparison}
\end{table*}

\section{Experimental Setup}
Results shown in Table \ref{tab:MoE_formulations} was done on CIFAR-10 dataset. Same Convolution architectures(number of convolution layers and number of filters) were used for all the experiments. Following are the hyperparameters and details used for the same.
\subsection{Deatails of experiments performed on CIFAR10}
\subsubsection{Hyperparameters}
\begin{itemize}
    \item Number of Convolution Layers = 5
    \item Number of filters in each layer = 26
    \item Optimizer : Adam
    \item Learning rate = 2e-4
\end{itemize}

\subsubsection{Architectures}
\begin{itemize}
    \item ReLU network : 5 Convolution layers with ReLU activation in each layer, followed by Global Average Pooling, followed by 1 Dense layer with 64 neurons.
    \item DLN : 5 Convolution layers, followed by Global Average Pooling, followed by 1 Dense layer with 64 neurons.
    \item DLGN, DLGN-PWC : 5 Convolution layers, followed by Global Average Pooling, followed by 1 Dense layer with 64 neurons.
\end{itemize}

\subsection{Details of Experiments performed on Fashion MNIST}
Data set was modified as described in the main paper. ReLU and DLGN-PWC models were used for ReLU and DLGN results respectively. The fashion MNIST images were flattend into $1D$ vectors and given as input to respective models.
\subsubsection{Hyperparameters}
\begin{itemize}
    \item Number of hidden layers = 5
    \item Number of nodes in each layer = 128
    \item Optimizer : Adam
    \item Learning rate = 2e-4
\end{itemize}
The overlap, NTK and accuracy graphs on Modified fashion MNIST are shown in corresponding sections.

\clearpage
\section{DLGN active regions for circle dataset}
\begin{figure*}[ht]
    \centering
    \includegraphics[width=\textwidth]{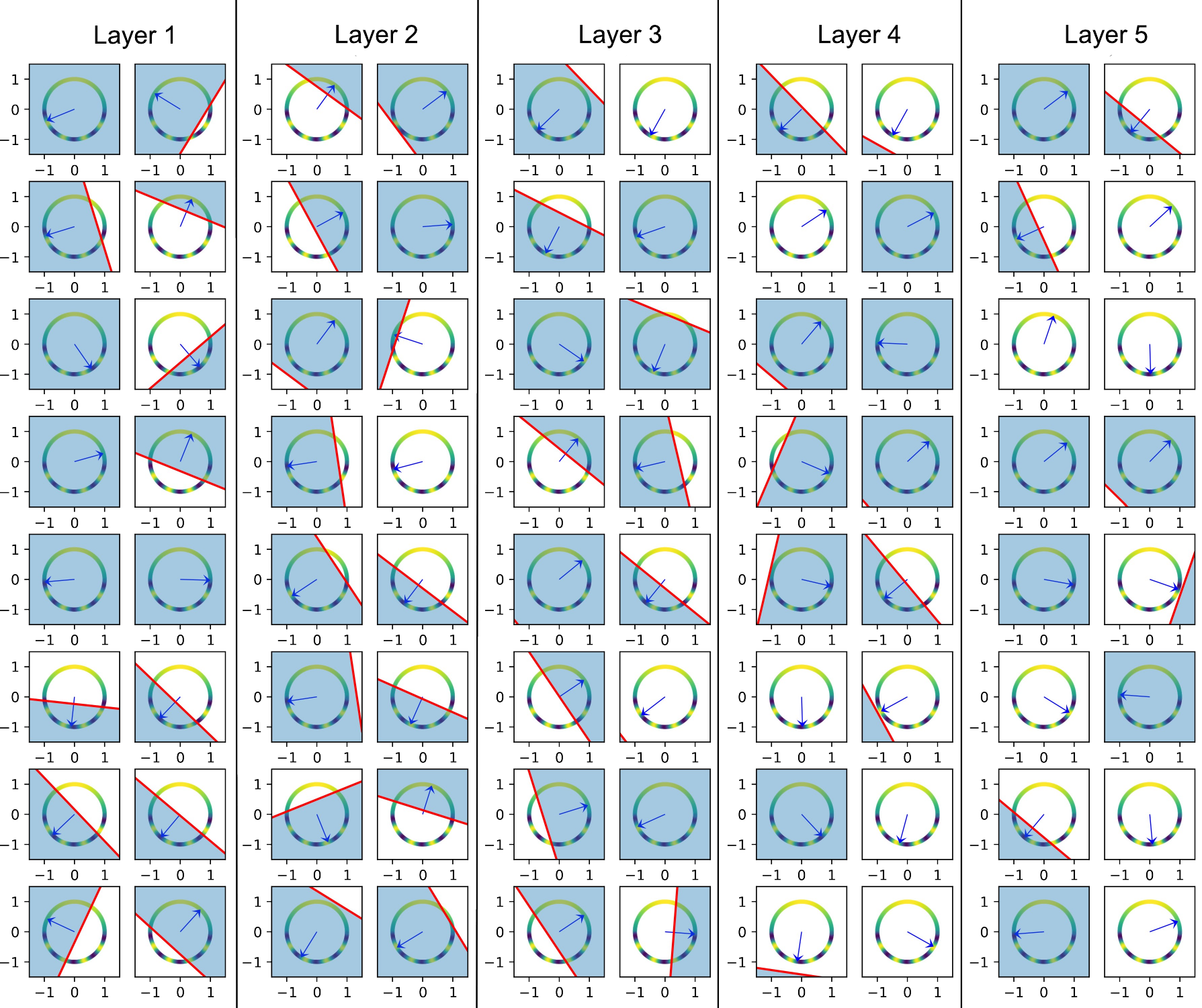}
    \caption{DLGN active regions in each layer at initialisation}
    \label{fig:DLGN_active_regions_init_full}
\end{figure*}
\begin{figure*}[ht]
    \centering
    \includegraphics[width=\textwidth]{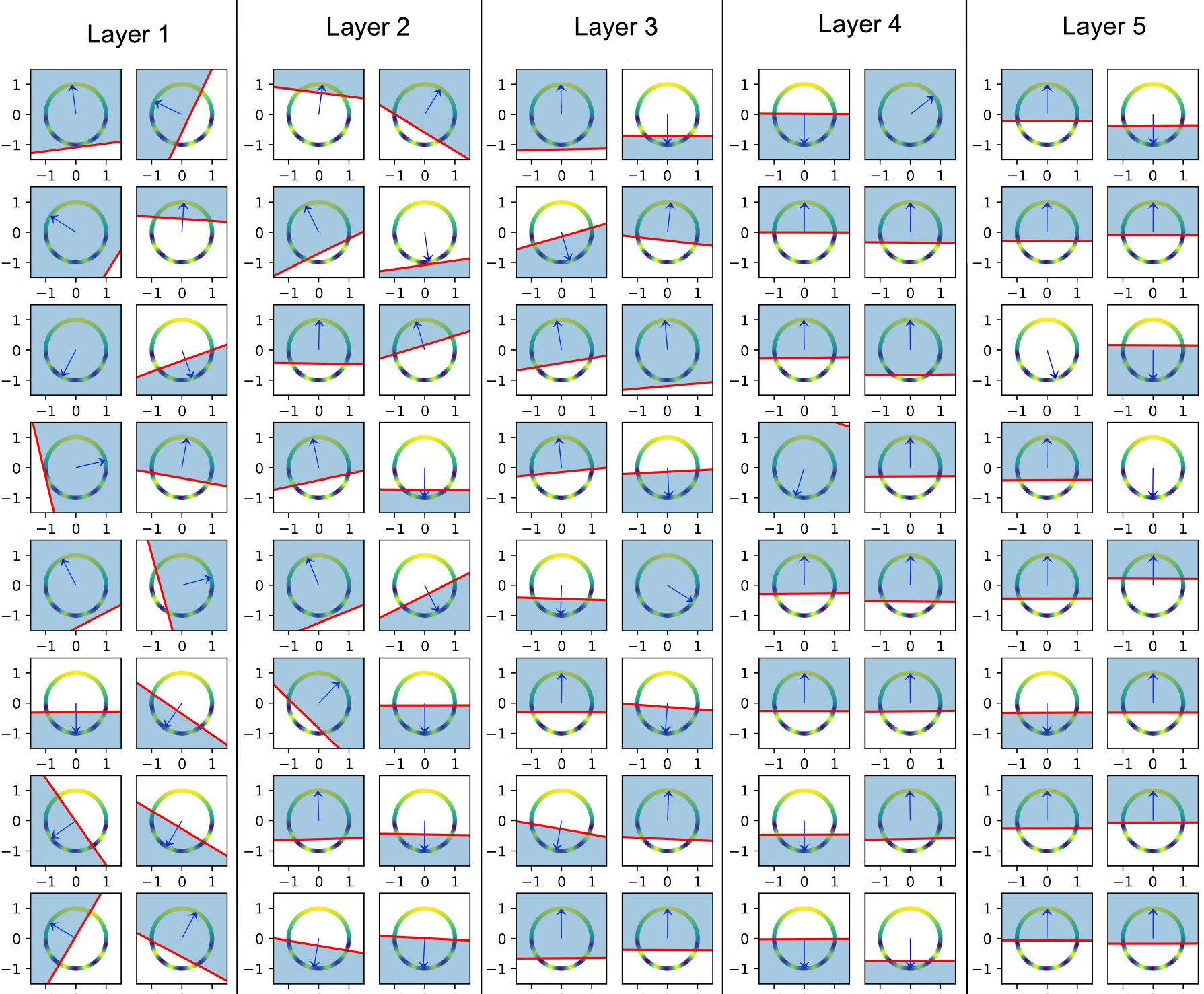}
    \caption{DLGN active regions in each layer at end of training}
    \label{fig:DLGN_active_regions_final_full}
\end{figure*}
\clearpage
\newpage

\section{Figures related to  Fashion MNIST}
\begin{figure*}[!ht]
    \centering
    \begin{subfigure}{.7\textwidth}
        \includegraphics[width=\linewidth]{figs/DLGN_overlap_appendix.png}
        \caption{Average number of paths active on the simple and complex regions of the input space.} 
    \end{subfigure}
    \par
    \begin{subfigure}{.7\textwidth}
        \includegraphics[width=\linewidth]{figs/DLGN_NTK_appendix.png}
        \caption{Average value of the diagonal entry of the empirical NTK} 
    \end{subfigure}
    \par
    \begin{subfigure}{.7\textwidth}
        \includegraphics[width=\linewidth]{figs/DLGN_test_acc_appendix.png}
        \caption{Average Test accuracy} 
    \end{subfigure}
    \caption{DLGN Results on Modified Fashion MNIST. Average is done over 3 runs of the experiment.}
    \label{fig:DLGN_FMNIST_appendix}
\end{figure*}

\begin{figure*}[!ht]
    \centering
    \begin{subfigure}{.7\textwidth}
        \includegraphics[width=\linewidth]{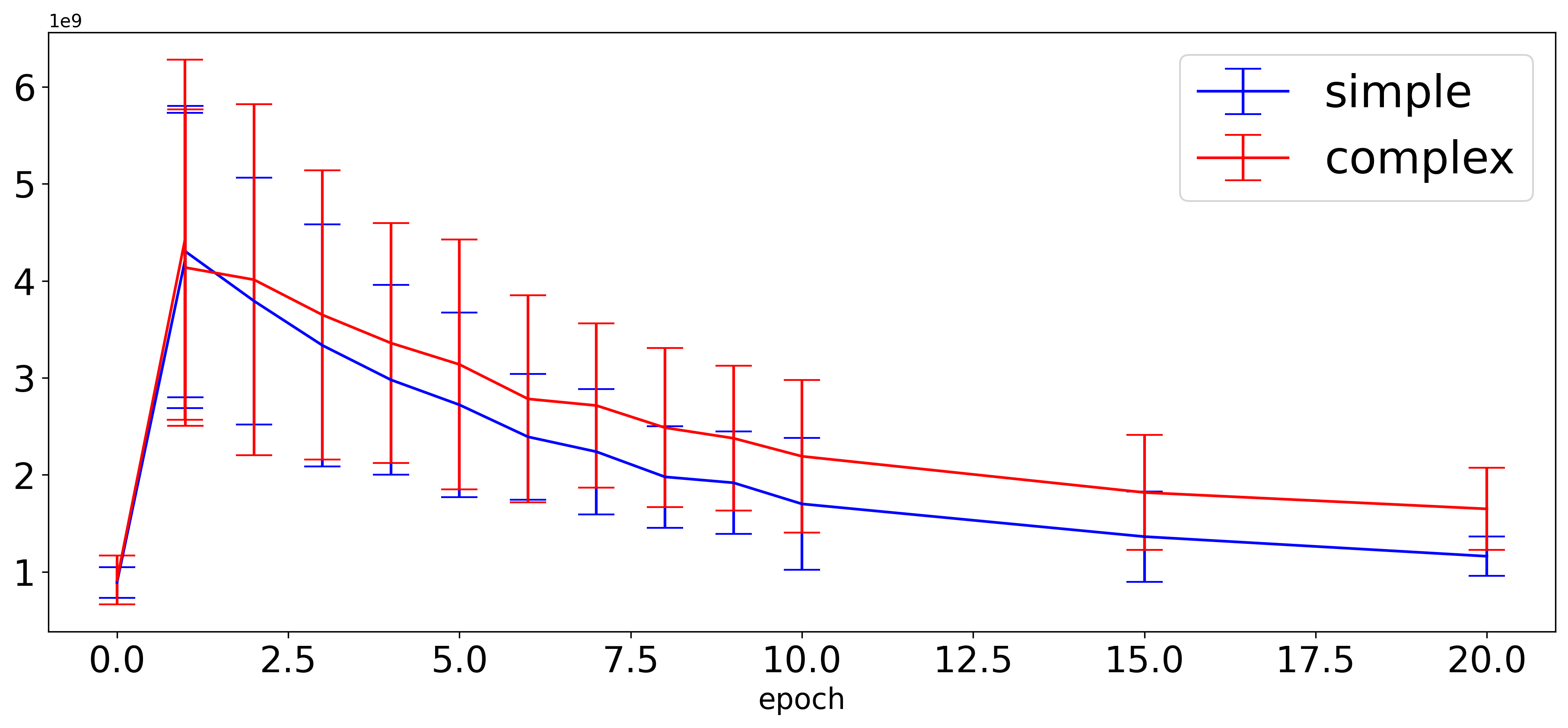}
        \caption{Average number of paths active on the simple and complex regions of the input space.} 
    \end{subfigure}
    \par
    \begin{subfigure}{.7\textwidth}
        \includegraphics[width=\linewidth]{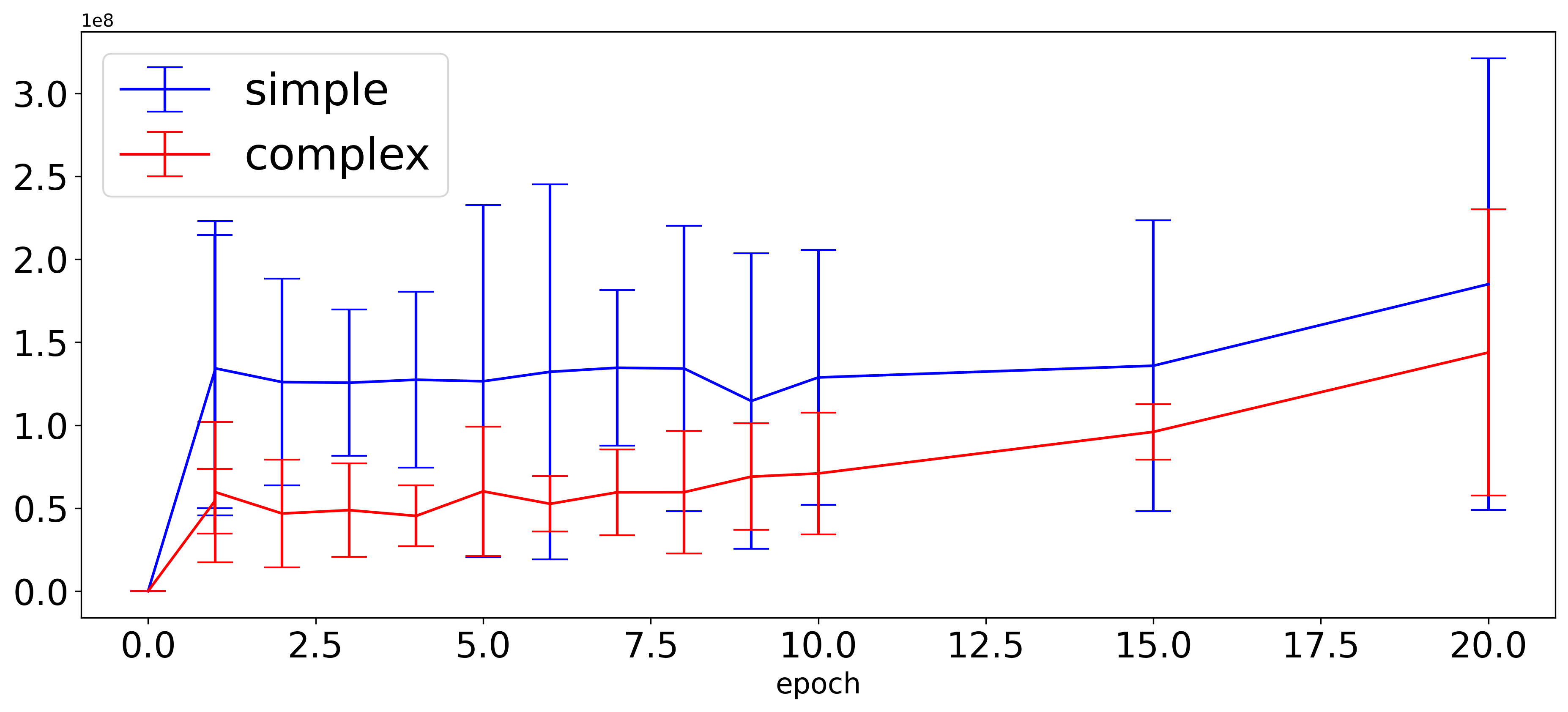}
        \caption{Average value of the diagonal entry of the empirical NTK} 
    \end{subfigure}
    \par
    \begin{subfigure}{.7\textwidth}
        \includegraphics[width=\linewidth]{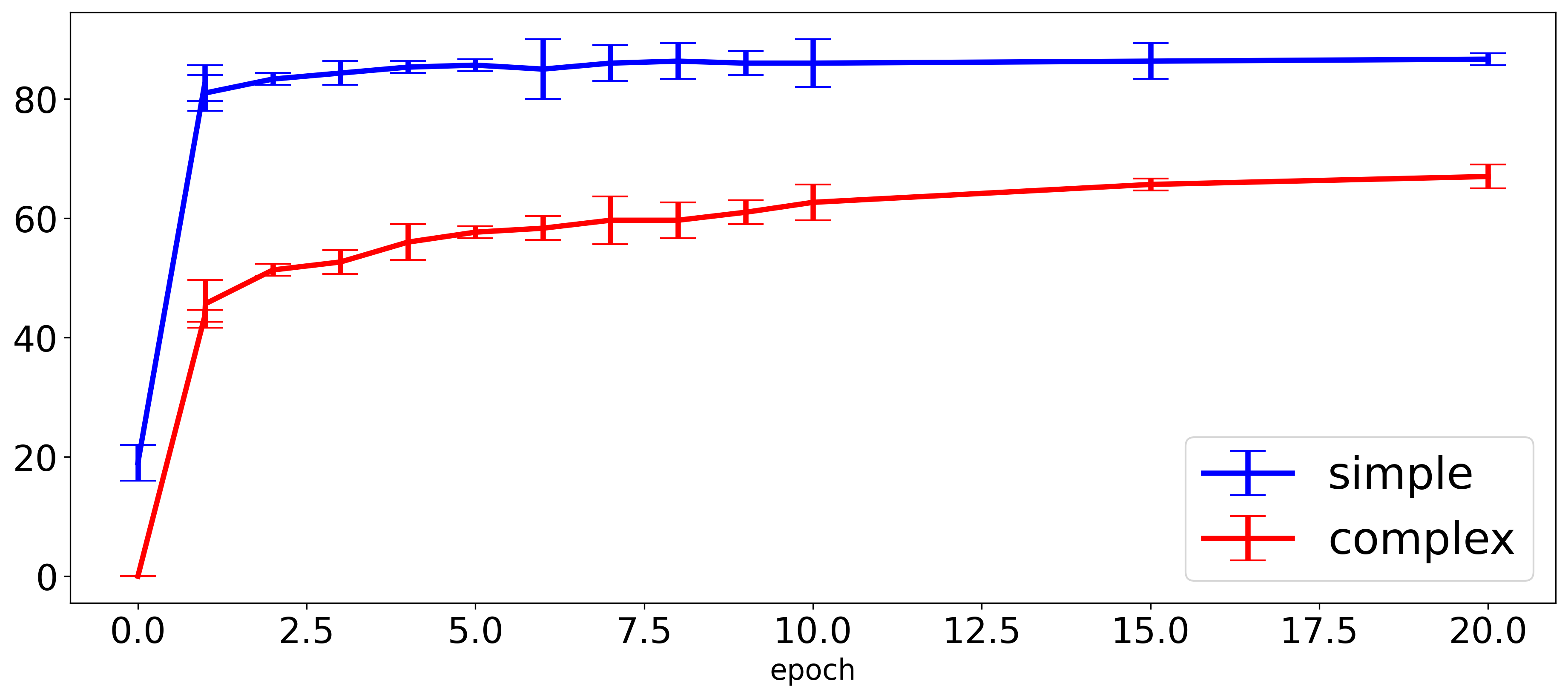}
        \caption{Average Test accuracy} 
    \end{subfigure}
    \caption{ReLU Results on Modified Fashion MNIST. Average is done over 3 runs of the experiment.}
    \label{fig:ReLU_FMNIST_appendix}
\end{figure*}
\newpage
\subsection{Overlap and NTK for Fashion MNIST}
Overlap kernels and NTK for Fashion MNIST is shown in the following figures. We have shown the kernels for 3 critical points. First at initialization, second at middle point where the loss on easy part is $\approx72\%$. This is to capture a point where the "easy" mode is almost learned(72 out of 85) but the hard mode is still mostly not learned yet.
 \begin{figure*}[ht]
    \centering
    \begin{subfigure}{.4\textwidth}
        \includegraphics[width=\linewidth]{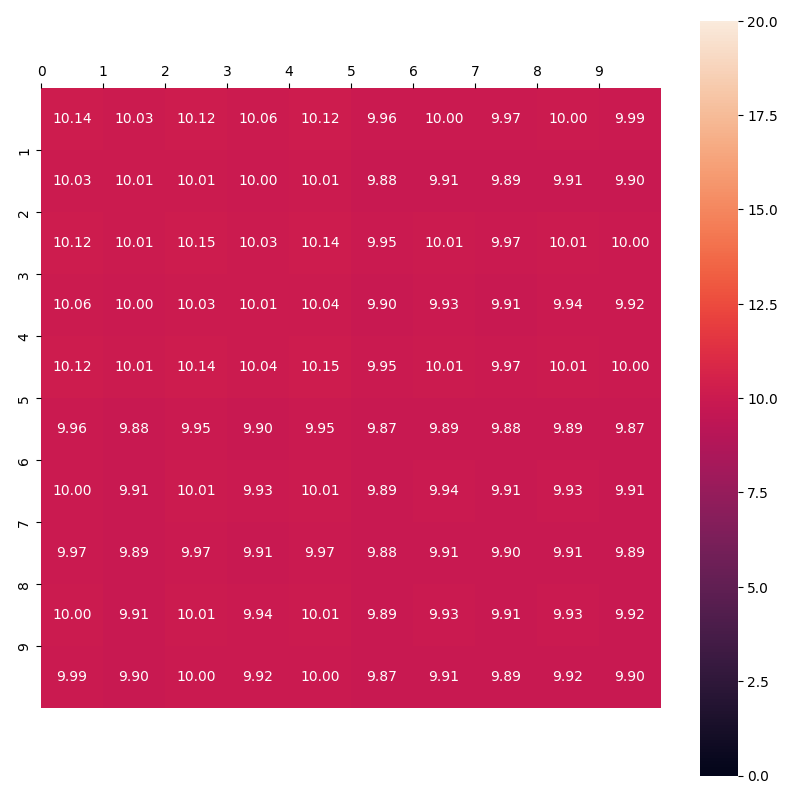}
        \caption{The (trace-normalised) overlap kernel for the DLGN visualised as matrix at initialization}
    \end{subfigure}
    \begin{subfigure}{.4\textwidth}
        \includegraphics[width=\linewidth]{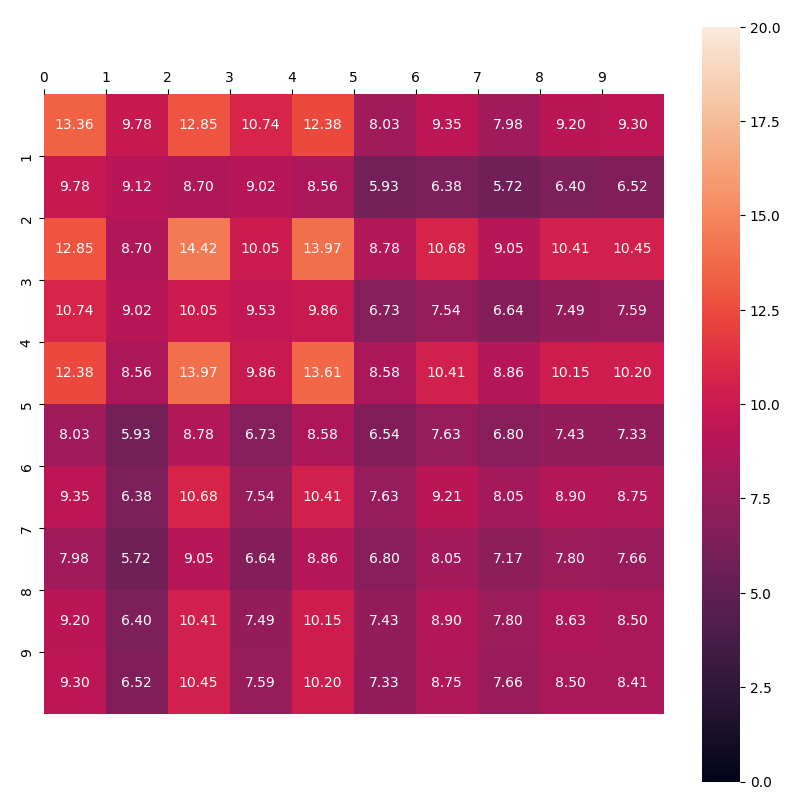}
        \caption{The (trace-normalised) overlap kernel for the DLGN visualised as matrix at middle.}
    \end{subfigure}
    
    \begin{subfigure}{.4\textwidth}
        \includegraphics[width=\linewidth]{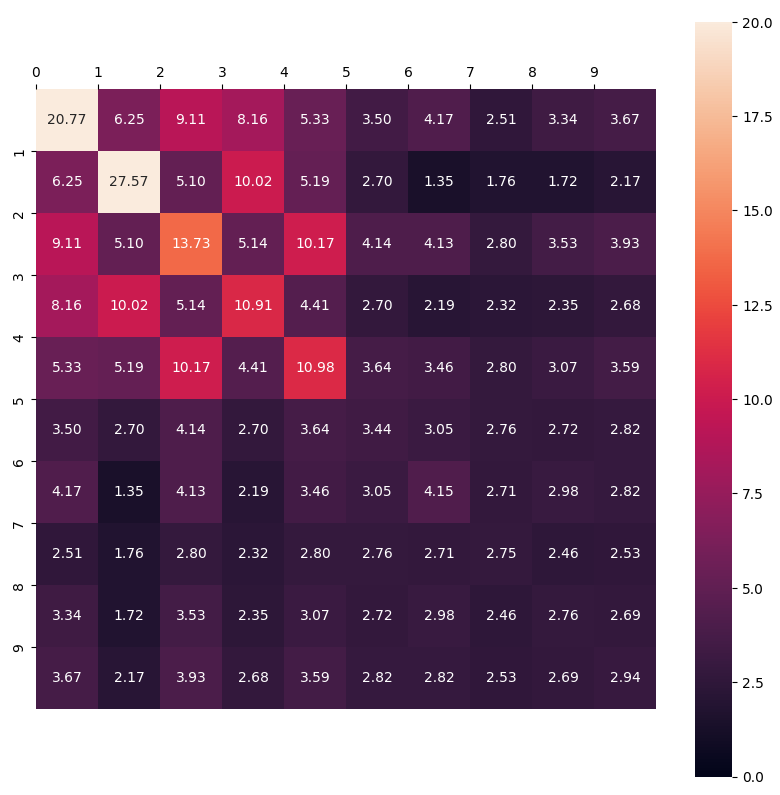}
        \caption{The (trace-normalised) overlap kernel for the DLGN visualised as matrix at final epoch.}
    \end{subfigure}
    \caption{The (trace-normalised) overlap kernels for the DLGN visualised as matrix at various epochs. The data points are ordered such that 0-10 scale in the heatmap represents class 1 to class 10(for ex. scale 0-1 represents all data points from class 1) respectively. 50 data points from each class(totalling 500 samples) are taken and then averaged over each class to build the overlap kernel of size 10x10 which is shown in the above heatmaps.}
    
\end{figure*}
\begin{figure*}[ht]
    \centering
    \begin{subfigure}{.4\textwidth}
        \includegraphics[width=\linewidth]{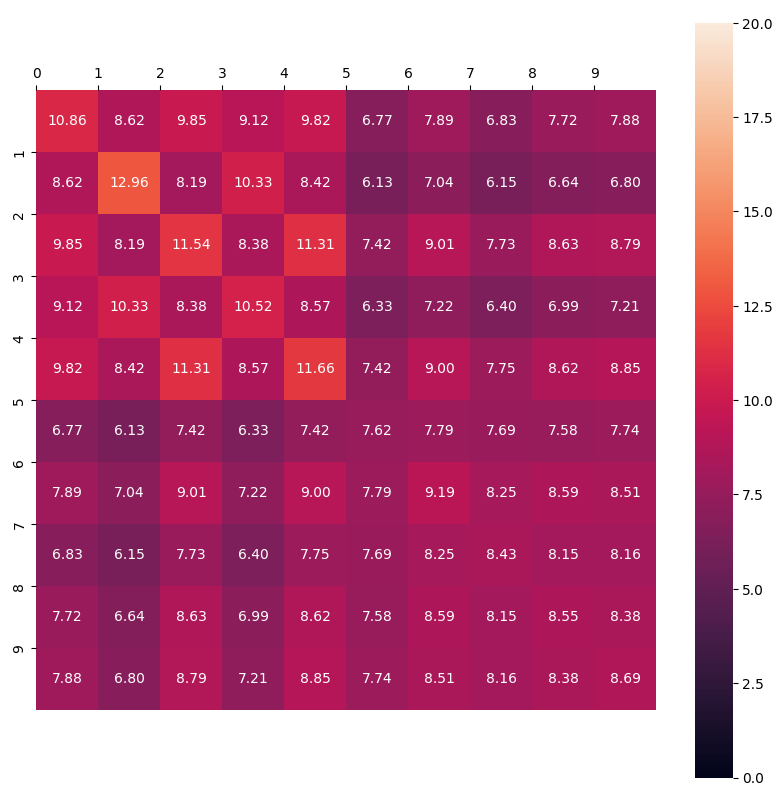}
        \caption{The (trace-normalised) overlap kernel for the MLP visualised as matrix at initialization}
    \end{subfigure}
    \begin{subfigure}{.4\textwidth}
        \includegraphics[width=\linewidth]{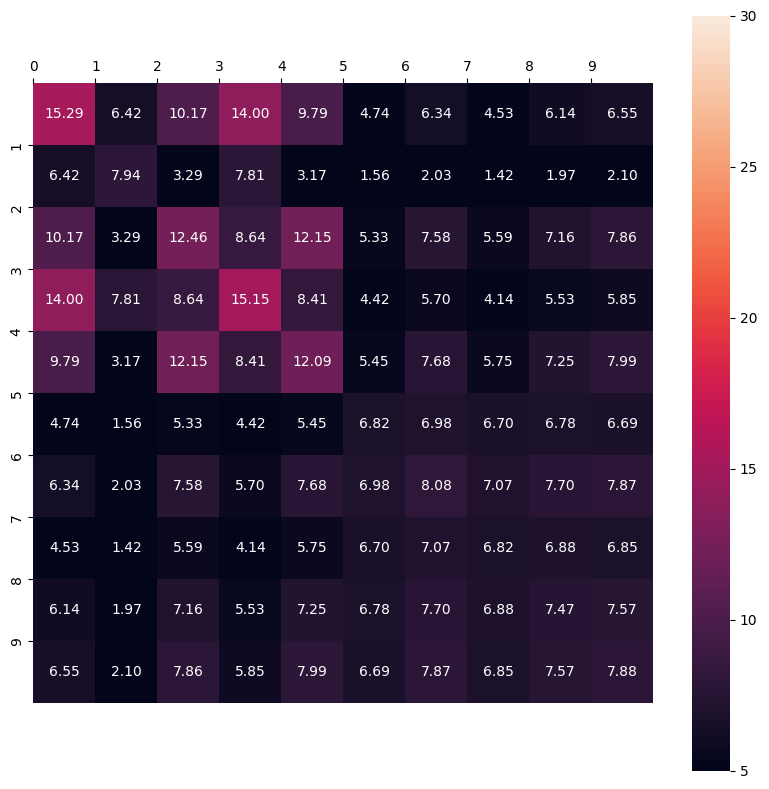}
        \caption{The (trace-normalised) overlap kernel for ReLU visualised as matrix at middle.}
    \end{subfigure}
    
    \begin{subfigure}{.4\textwidth}
        \includegraphics[width=\linewidth]{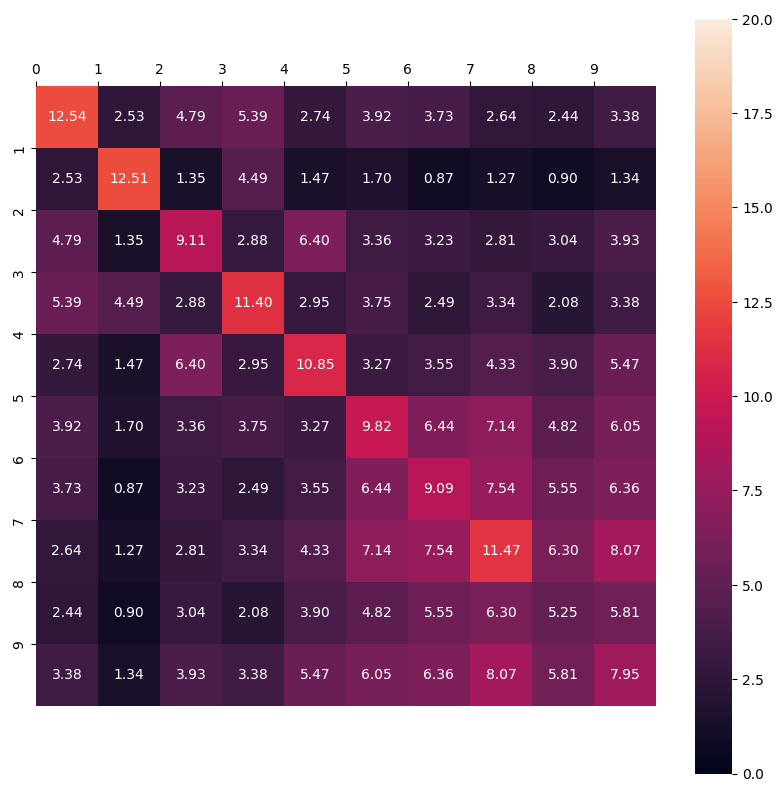}
        \caption{The (trace-normalised) overlap kernel for the ReLU visualised as matrix at final epoch.}
    \end{subfigure}
    \caption{The (trace-normalised) overlap kernels for the ReLU visualised as matrix at various epochs. The data points are ordered such that 0-10 scale in the heatmap represents class 1 to class 10(for ex. scale 0-1 represents all data points from class 1) respectively. 50 data points from each class(totalling 500 samples) are taken and then averaged over each class to build the overlap kernel of size 10x10 which is shown in the above heatmaps.}
    
\end{figure*}

\begin{figure*}[ht]
    \centering
    \begin{subfigure}{.4\textwidth}
        \includegraphics[width=\linewidth]{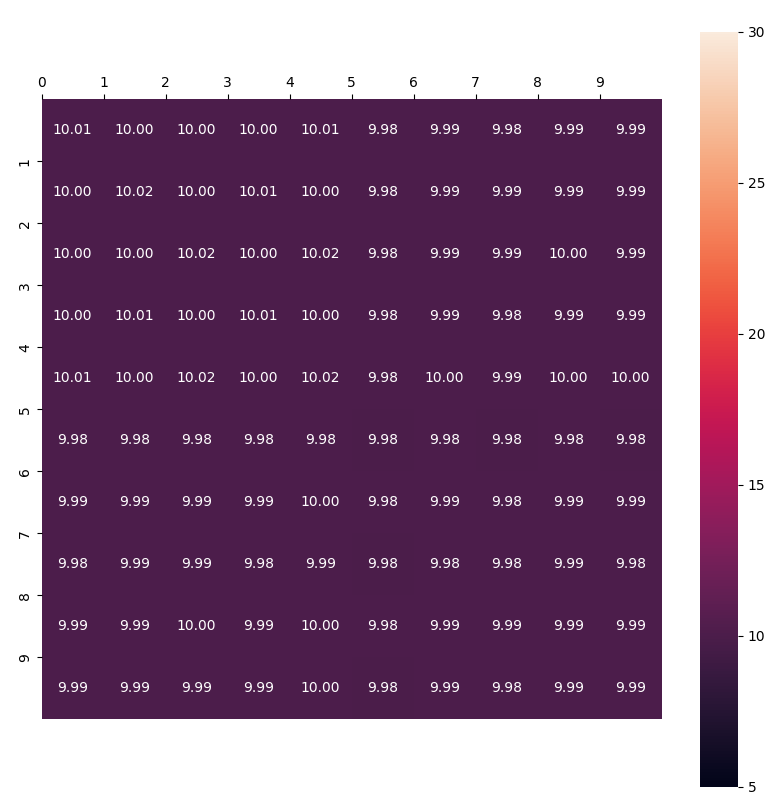}
        \caption{The (trace-normalised) NTK for the DLGN visualised as matrix at initialization}
    \end{subfigure}
    \begin{subfigure}{.4\textwidth}
        \includegraphics[width=\linewidth]{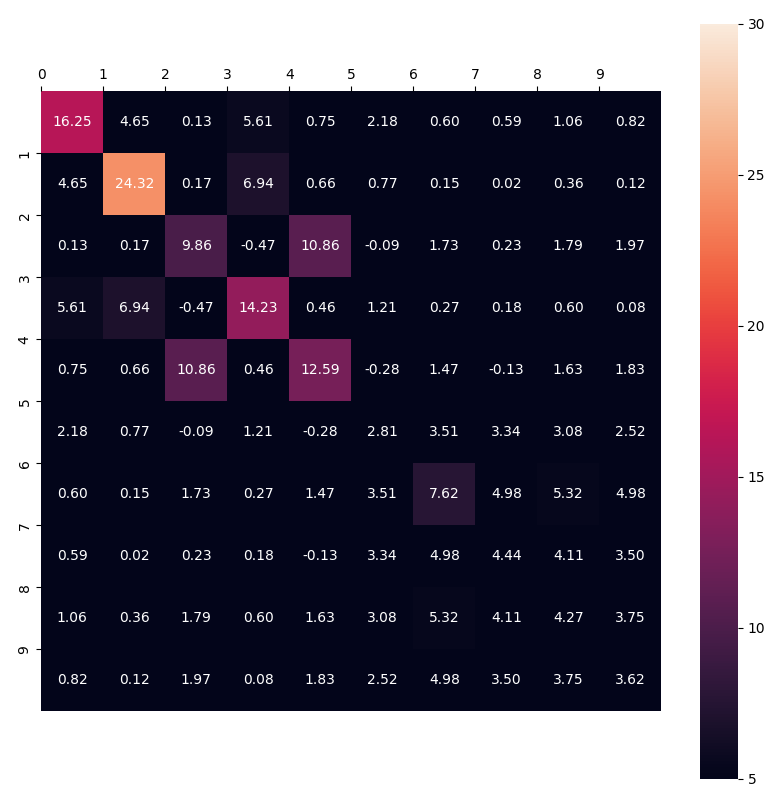}
        \caption{The (trace-normalised) NTK for the DLGN visualised as matrix at middle.}
    \end{subfigure}
    
    \begin{subfigure}{.4\textwidth}
        \includegraphics[width=\linewidth]{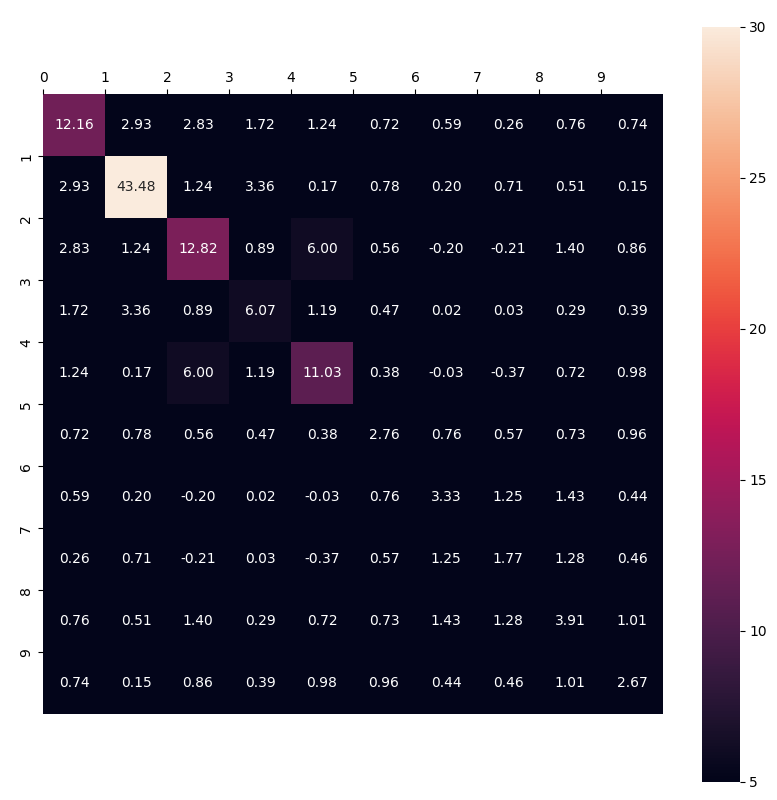}
        \caption{The (trace-normalised) NTK for the DLGN visualised as matrix at final epoch.}
    \end{subfigure}
    \caption{The (trace-normalised) NTK for the DLGN visualised as matrix at various epochs. The data points are ordered similiar to as described for the overlap kernels.}
    
\end{figure*}
\begin{figure*}[ht]
    \centering
    \begin{subfigure}{.4\textwidth}
        \includegraphics[width=\linewidth]{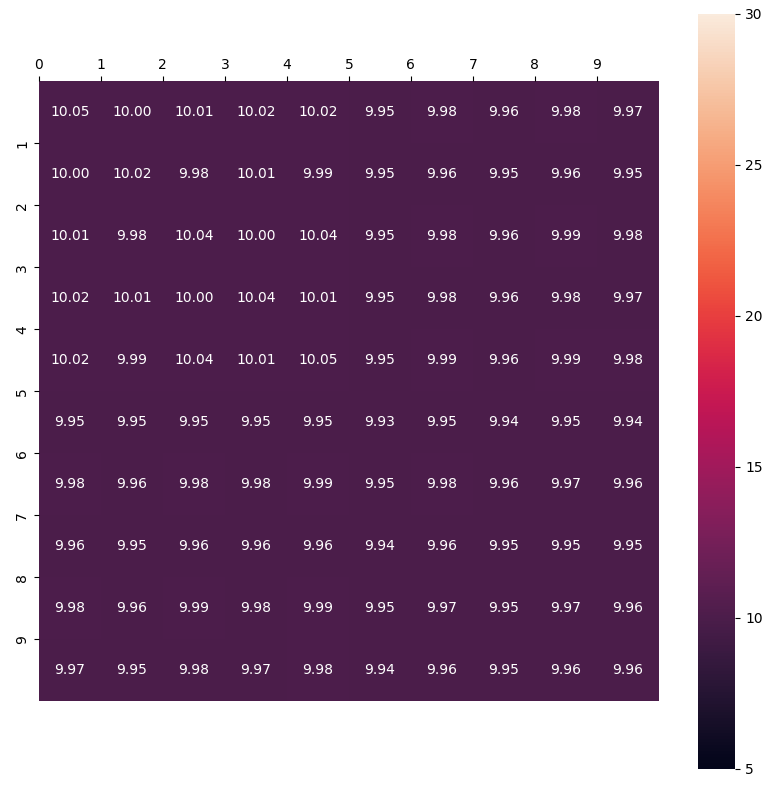}
        \caption{The (trace-normalised) NTK kernel for ReLU visualised as matrix at initialization}
    \end{subfigure}
    \begin{subfigure}{.4\textwidth}
        \includegraphics[width=\linewidth]{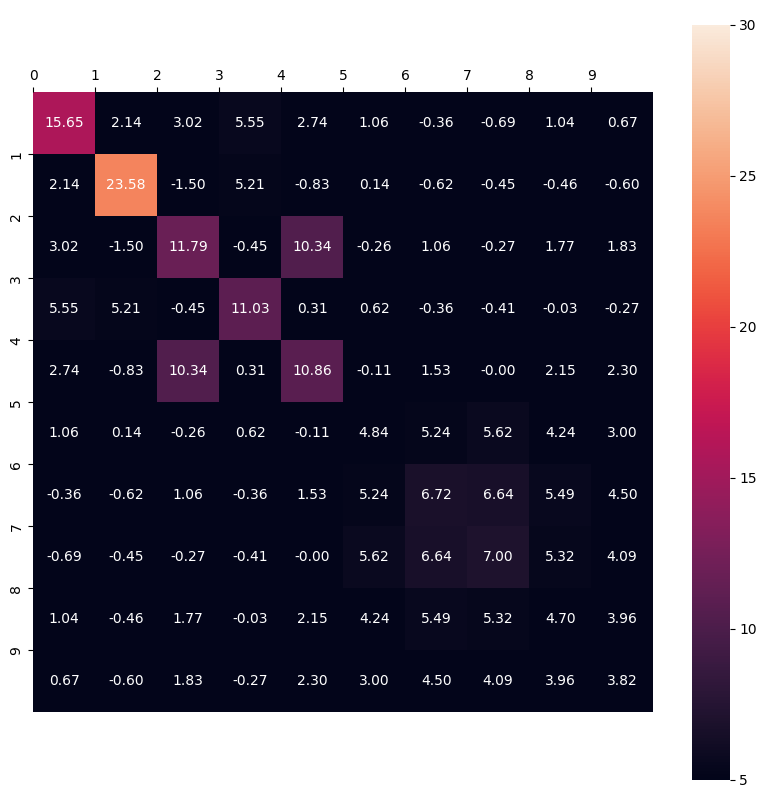}
        \caption{The (trace-normalised) NTK kernel for the ReLU visualised as matrix at middle.}
    \end{subfigure}
    
    \begin{subfigure}{.4\textwidth}
        \includegraphics[width=\linewidth]{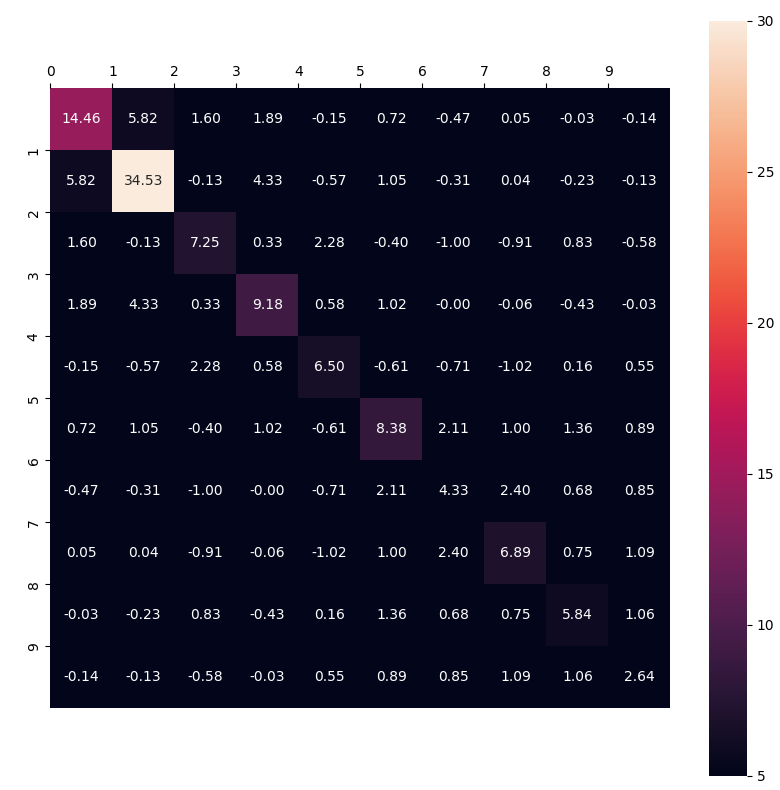}
        \caption{The (trace-normalised) NTK kernel for the ReLU visualised as matrix at final epoch.}
    \end{subfigure}
    \caption{The (trace-normalised) NTK for the DLGN visualised as matrix at various epochs. The data points are ordered similiar to as described for the overlap kernels.}
    
\end{figure*}

\clearpage
\newpage
\section{Proofs}
\subsection{Proof of  Theorem \ref{thm:dlgn-theorem}}
\begin{theorem*}
Let $W_2, \ldots, W_{L-1}$ be matrices of size $m \times m$. Let $W_1 \in \R^{m\times d}$ and $W_L \in \R^{1\times m}$. Let $U_1, \ldots, U_L$ be matrices of the same size as the corresponding $W$ matrices. For any path $\pi = (i_1, \ldots, i_{L-1}) \in [m]^{L-1}$, let the gating and expert models $f_\pi, g_\pi:\R^d \> \R$ be
\begin{align*}
f_\pi(\x) &= \prod_{\ell=1}^{L-1} \1\left(\boldeta_{\ell, i_\ell}(\x) \geq 0 \right) \\
g_\pi(\x) &= \left[ \prod_{\ell=2}^{L-1} u_{\ell,i_\ell,i_{\ell-1}}\right] u_{L,1,i_{L-1}} \bu_{1,i_1}^\top \x. \\
\end{align*}
Also, let $\boldeta, \h$ be collections of mappings such that
\begin{align}
    \boldeta_0(\x)=\h_0(\x) &= \x \nonumber \\ 
    \text{ for } \ell \in \{1,\ldots,L-1\} \qquad \boldeta_\ell(\x) &= W_\ell \boldeta_{\ell-1}(\x)   \nonumber \\
    \text{ for } \ell \in \{1,\ldots,L-1\} \qquad  \h_\ell(\x) &= \1(\boldeta_\ell(\x) \geq 0) \circ \left( U_\ell \h_{\ell-1}(\x) \right) \label{eqn:h-vector-defn}.
\end{align}
Then we have that
\begin{equation}
        \sum_{\pi \in [m]^{L-1}} f_\pi(\x) g_\pi(\x)  = U_L \left(\h_{L-1}(\x) \right)
        \label{eqn:to_prove}
\end{equation}
\end{theorem*}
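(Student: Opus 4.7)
The plan is to expand the sum $\sum_{\pi} f_\pi(\x) g_\pi(\x)$ over all paths $\pi = (i_1, \ldots, i_{L-1})$ into a nested sum, and then recognise that the innermost partial sums, when grouped appropriately, exactly reproduce the recursion that defines $\h_\ell(\x)$ in Equation \ref{eqn:h-vector-defn}. Concretely, writing $s_{\ell, i}(\x) = \1(\boldeta_{\ell, i}(\x) \geq 0)$, the summand factorises as
\[
f_\pi(\x) g_\pi(\x) \;=\; s_{1,i_1}(\x)\bigl(\bu_{1,i_1}^\top \x\bigr) \cdot \left[\prod_{\ell=2}^{L-1} u_{\ell,i_\ell,i_{\ell-1}} s_{\ell, i_\ell}(\x)\right] \cdot u_{L,1,i_{L-1}},
\]
so that the index $i_\ell$ couples only to $i_{\ell-1}$ and $i_{\ell+1}$. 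This chain structure is exactly what is needed to evaluate the sum from the inside out.

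The key step is an induction on $\ell \in \{1, \ldots, L-1\}$ showing the identity
\[
\h_{\ell,j}(\x) \;=\; \sum_{i_1, \ldots, i_{\ell-1} \in [m]} s_{\ell,j}(\x)\!\left[\prod_{k=1}^{\ell-1} s_{k,i_k}(\x)\right]\!\left[\prod_{k=2}^{\ell} u_{k, i_k, i_{k-1}}\right]\! \bu_{1,i_1}^\top \x,
\]
under the convention $i_\ell = j$. The base case $\ell = 1$ is immediate: the sum is empty and reduces to $s_{1,j}(\x) \bu_{1,j}^\top \x = \left[\1(\boldeta_1(\x) \geq 0) \circ (U_1 \x)\right]_j$, matching Equation \ref{eqn:h-vector-defn}. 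For the inductive step, I would peel off the outermost index $i_{\ell-1}$ from the sum for $\h_{\ell,j}$ and recognise the inner sum as $\h_{\ell-1, i_{\ell-1}}(\x)$, obtaining
\[
\h_{\ell,j}(\x) \;=\; s_{\ell,j}(\x) \sum_{i_{\ell-1}} u_{\ell,j,i_{\ell-1}} \h_{\ell-1,i_{\ell-1}}(\x) \;=\; s_{\ell,j}(\x) \cdot (U_\ell \h_{\ell-1}(\x))_j,
\]
which is precisely the DLGN recursion.

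Once the induction is established, the theorem follows by applying it at level $\ell = L-1$ and summing over the last index $i_{L-1}$ weighted by $u_{L,1,i_{L-1}}$: the RHS of Equation \ref{eqn:to_prove} equals $\sum_{i_{L-1}} u_{L,1,i_{L-1}} \h_{L-1,i_{L-1}}(\x)$, while the LHS, with the factorisation above, is exactly the same quantity.

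There is no genuine obstacle, only index bookkeeping. The main thing to be careful about is the slightly awkward boundary of the path: $\bu_{1,i_1}$ is a vector (layer $1$ uses $\x$) whereas later layers use scalar weights $u_{\ell, i_\ell, i_{\ell-1}}$, and the final layer contributes $u_{L,1,i_{L-1}}$ outside the indicator product. Keeping the indicator on $s_{\ell, j}$ attached to the \emph{output} index $j$ (not $i_{\ell-1}$) at each level is what makes the recursion line up cleanly with the Hadamard-product form $\1(\boldeta_\ell \geq 0) \circ (U_\ell \h_{\ell-1})$ in Equation \ref{eqn:h-vector-defn}.
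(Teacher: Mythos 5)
Your proposal is correct and follows essentially the same route as the paper's proof: an induction over layers showing that each hidden unit $\h_{\ell,i_\ell}(\x)$ equals the sum of gated path-values over all partial paths ending at that unit, followed by a final summation against the output weights $u_{L,1,i_{L-1}}$. The only differences are cosmetic --- you keep $\bu_{1,i_1}^\top \x$ as an inner product where the paper expands the input coordinate sum $\sum_{i_0} x_{i_0} u_{1,i_1,i_0}$ explicitly, and your inductive step collapses the path sum into the recursion $\h_\ell(\x) = \1(\boldeta_\ell(\x) \geq 0) \circ (U_\ell \h_{\ell-1}(\x))$ rather than expanding the recursion into the path sum.
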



We prove an intermediate lemma based on induction first.

\begin{lem}
Let $\ell \in \{1,2,\ldots, L-1\}$, and let $i_\ell \in [m]$
\begin{equation}
\h_{\ell,i_\ell}(\x) = \summ{i_0 = 1}{d} x_{i_0} \summ{i_1=1}{m}\summ{i_1=2}{m}\dotsm \summ{i_{\ell-1}=1}{m} \left(\prod_{k=1}^{\ell}  \1(\boldeta_{k,i_k}(\x)\geq0)  \right) \left(\prod_{k=1}^\ell u_{k, i_{k}, i_{k-1}}  \right)
\label{eqn:lem-induction}
\end{equation}
\end{lem}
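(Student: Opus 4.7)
The proof plan is induction on the layer index $\ell$. For the base case $\ell = 1$, I would unfold the definition in \eqref{eqn:h-vector-defn}: since $\h_0(\x) = \x$, we have $\h_{1,i_1}(\x) = \1(\boldeta_{1,i_1}(\x) \geq 0) \cdot (U_1 \x)_{i_1} = \1(\boldeta_{1,i_1}(\x) \geq 0) \sum_{i_0 = 1}^{d} u_{1, i_1, i_0}\, x_{i_0}$, which matches the right-hand side of \eqref{eqn:lem-induction} at $\ell = 1$, where the chain of sums over $i_1, \ldots, i_{\ell-1}$ is empty and each product consists of a single factor.

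For the inductive step, I would assume \eqref{eqn:lem-induction} holds at level $\ell - 1$ and apply the recursion to write
\[
\h_{\ell, i_\ell}(\x) = \1(\boldeta_{\ell, i_\ell}(\x) \geq 0) \sum_{i_{\ell - 1} = 1}^{m} u_{\ell, i_\ell, i_{\ell-1}}\, \h_{\ell-1, i_{\ell-1}}(\x).
\]
Substituting the inductive hypothesis for $\h_{\ell-1, i_{\ell-1}}(\x)$ yields an expression in which the freshly extracted indicator $\1(\boldeta_{\ell, i_\ell}(\x) \geq 0)$ and the weight $u_{\ell, i_\ell, i_{\ell-1}}$ can be pulled under all existing summations, since neither depends on the inner indices $i_0, \ldots, i_{\ell-2}$. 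Merging these with the products $\prod_{k=1}^{\ell-1} \1(\boldeta_{k, i_k}(\x) \geq 0)$ and $\prod_{k=1}^{\ell-1} u_{k, i_k, i_{k-1}}$ from the hypothesis extends each product by exactly one factor at $k = \ell$, giving precisely the right-hand side of \eqref{eqn:lem-induction} at level $\ell$.

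The only subtlety is index bookkeeping: one has to verify that the summation variable $i_{\ell-1}$ produced by the matrix-vector product fits cleanly as the last index in the chain $\sum_{i_1} \cdots \sum_{i_{\ell-1}}$, and that the weight and indicator products extend correctly from $\{1, \ldots, \ell-1\}$ to $\{1, \ldots, \ell\}$. There is no analytic obstacle here; the lemma is essentially a formal unrolling of the fact that a DLGN layer (a matrix product followed by an entrywise indicator mask) distributes over paths in the underlying computation graph. Once \eqref{eqn:lem-induction} is established for $\ell = L-1$, plugging it into $U_L \h_{L-1}(\x) = \sum_{i_{L-1}} u_{L,1,i_{L-1}} \h_{L-1,i_{L-1}}(\x)$ recovers \eqref{eqn:to_prove}, completing the proof of the theorem.
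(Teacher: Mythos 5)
Your proposal is correct and follows essentially the same route as the paper's own proof: induction on $\ell$, with the base case unrolling the definition $\h_{1,i_1}(\x) = \1(\boldeta_{1,i_1}(\x)\geq 0)\,\bu_{1,i_1}^\top\x$ and the inductive step expanding $\h_{\ell,i_\ell}(\x)$ via the layer recursion, substituting the hypothesis, and absorbing the new indicator and weight factors into the products. The only differences are cosmetic (you index the step as $\ell-1 \to \ell$ rather than $\ell \to \ell+1$, and you spell out the sum over $i_0$ in the base case, which the paper leaves implicit in the notation $U_{1,i}\x$).
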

\begin{proof}

We will prove by Induction.\\
Basis Step: When $\ell = 1$, the RHS of Equation \ref{eqn:lem-induction} becomes
\[
\h_{1,i}(\x) = U_{1,i}\h_{0}(\x) \1(\boldeta_{1, i}(\x) \geq 0) = U_{1,i}\x\1(\boldeta_{1, i}(\x) \geq 0)
\]
which is true by the definition of $\h$ from Equation \ref{eqn:h-vector-defn} and that $\h_0(\x)=\x$.

Assume Equation \ref{eqn:lem-induction} is true for some $\ell$. We now prove that it holds for $\ell+1$

From the definition of $\h$ in Equation \ref{eqn:h-vector-defn} and the induction assumption, for any $i_{\ell+1} \in [m]$
\begin{align*}
\h_{\ell+1,i_{\ell+1}}(\x) 
&=  \1(\eta_{\ell+1,i_{\ell+1}}(\x) \geq 0)  \left( \bu_{\ell+1,i_{\ell+1}} \h_{\ell}(\x) \right)  \\
&=  \1(\eta_{\ell+1,i_{\ell+1}}(\x) \geq 0)  \sum_{i_\ell =1}^m u_{\ell+1,i_{\ell+1},i_\ell} h_{\ell, i_\ell}(\x)   \\
&= \summ{i_0 = 1}{d} x_{i_0} \summ{i_1=1}{m}\summ{i_1=2}{m}\dotsm \summ{i_{\ell}=1}{m} \left(\prod_{k=1}^{\ell+1}  \1(\boldeta_{k,i_k}(\x)\geq0)  \right) \left(\prod_{k=1}^{\ell+1} u_{k, i_{k}, i_{k-1}}  \right)
\end{align*}


\end{proof}

\begin{proof}[Proof of Theorem 2]
Using Lemma 3 for $\ell={L-1}$ and some $i_{L-1} \in [m]$ we have
\[
h_{L-1, i_{L-1}}(\x) = \summ{i_0 = 1}{d} x_{i_0} \summ{i_1=1}{m}\summ{i_1=2}{m}\dotsm \summ{i_{L-2}=1}{m} \left(\prod_{k=1}^{L-1}  \1(\boldeta_{k,i_k}(\x)\geq0)  \right) \left(\prod_{k=1}^{L-1} U_{k, i_{k}, i_{k-1}}  \right)
\]

Consider the RHS of Equation \ref{eqn:to_prove}
\begin{align*}
U_L \left(\h_{L-1}(\x) \right)
&=    \sum_{i_{L-1}=1}^m U_{L,1,i_{L-1}} h_{L-1, i_{L-1}}(\x) \\
&= \sum_{i_{L-1}=1}^m U_{L,1,i_{L-1}} \summ{i_0 = 1}{d} x_{i_0} \summ{i_1=1}{m}\summ{i_1=2}{m}\dotsm \summ{i_{L-2}=1}{m} \left(\prod_{k=1}^{L-1}  \1(\boldeta_{k,i_k}(\x)\geq0)  \right) \left(\prod_{k=1}^{L-1} U_{k, i_{k}, i_{k-1}}  \right) \\
&= U_{L,1,i_{L-1}}  \summ{i_0 = 1}{d} x_{i_0} \summ{i_1=1}{m}\summ{i_1=2}{m}\dotsm \summ{i_{L-2}=1}{m} \sum_{i_{L-1}=1}^m\left(\prod_{k=1}^{L-1}  \1(\boldeta_{k,i_k}(\x)\geq0)  \right) \left(\prod_{k=1}^{L-1} U_{k, i_{k}, i_{k-1}}   \right) \\
&= \summ{i_0 = 1}{d} x_{i_0} \sum_{\pi \in [m]^{L-1}} f_\pi(\x)  \left(\prod_{k=2}^{L-1} U_{k, i_{k}, i_{k-1}}   \right) 
 U_{L,1,i_{L-1}} U_{1,i_1,i_0} \\
&=  \sum_{\pi \in [m]^{L-1}} f_\pi(\x)  \left(\prod_{k=2}^{L-1} U_{k, i_{k}, i_{k-1}} \right)  U_{L,1,i_{L-1}}  \summ{i_0 = 1}{d} x_{i_0} U_{1,i_1,i_0}  \\
&=  \sum_{\pi \in [m]^{L-1}} f_\pi(\x)  \left(\prod_{k=2}^{L-1} U_{k, i_{k}, i_{k-1}} \right)  U_{L,1,i_{L-1}}  \bu_{1,i_1}^\top \x \\
&=  \sum_{\pi \in [m]^{L-1}} f_\pi(\x)  g_\pi(\x) \\
\end{align*}

\end{proof}

\subsection{Proof of  Theorem \ref{thm:relu-theorem}}

\begin{theorem*}

Let $W_2, \ldots, W_{L-1}$ be matrices of size $m \times m$. Let $W_1 \in \R^{m\times d}$ and $W_L \in \R^{1\times m}$. For any path $\pi = (i_1, \ldots, i_{L-1}) \in [m]^{L-1}$, let the gating and expert models $f_\pi, g_\pi:\R^d \> \R$ be
\begin{align*}
f_\pi(\x) &= \prod_{\ell=1}^{L-1} \1\left(\w_{\ell,i_\ell}^\top \h_{\ell-1}(\x) \geq 0 \right)\\ 
g_\pi(\x) &= \left[ \prod_{\ell=2}^{L-1} w_{\ell,i_\ell,i_{\ell-1}}\right] w_{L,1,i_{L-1}} \w_{1,i_1}^\top \x
\end{align*}
Also, let $ \h$ be collections of mappings such that
\begin{align}
    \h_0(\x) &= \x \nonumber \\ 
    \text{ for } \ell \in \{1,\ldots,L-1\} \qquad  \h_\ell(\x) &= \1(W_\ell \h_{\ell-1}(\x) \geq 0) \circ \left( W_\ell \h_{\ell-1}(\x) \right) \label{eqn:h-vector-defn-relu}.
\end{align}
Then we have that
\begin{equation}
        \sum_{\pi \in [m]^{L-1}} f_\pi(\x) g_\pi(\x)  = W_L \left(\h_{L-1}(\x) \right)
        \label{eqn:to_prove_relu}
\end{equation}
\end{theorem*}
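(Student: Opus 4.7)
The plan is to mirror the proof of Theorem~\ref{thm:dlgn-theorem} almost verbatim, since the ReLU network is the special case in which the ``gating'' and ``expert'' pre-activations collapse onto the same quantity. The central trick is the identity $\bphi(z) = \1(z \geq 0) \cdot z$, which lets me rewrite the recursion in Equation~\ref{eqn:h-vector-defn-relu} as $\h_\ell(\x) = \1(W_\ell \h_{\ell-1}(\x) \geq 0) \circ (W_\ell \h_{\ell-1}(\x))$, i.e.\ exactly the ``indicator times linear image'' form that was used to unroll the DLGN layerwise sum.

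First I would prove, by induction on $\ell \in \{1,\ldots,L-1\}$, the following analogue of Lemma~3:
\begin{equation*}
\h_{\ell,i_\ell}(\x) \;=\; \summ{i_0=1}{d} x_{i_0} \summ{i_1=1}{m} \summ{i_2=1}{m} \dotsm \summ{i_{\ell-1}=1}{m} \left(\prod_{k=1}^{\ell} \1(\w_{k,i_k}^\top \h_{k-1}(\x) \geq 0)\right) \left(\prod_{k=1}^{\ell} w_{k,i_k,i_{k-1}}\right).
\end{equation*}
The base case $\ell=1$ is immediate from $\h_0(\x)=\x$ and $\h_{1,i_1}(\x) = \1(\w_{1,i_1}^\top \x \geq 0)\,(\w_{1,i_1}^\top \x)$. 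For the inductive step I would use $\h_{\ell+1,i_{\ell+1}}(\x) = \1(\w_{\ell+1,i_{\ell+1}}^\top \h_\ell(\x) \geq 0)\cdot \sum_{i_\ell} w_{\ell+1,i_{\ell+1},i_\ell}\, h_{\ell,i_\ell}(\x)$, substitute the induction hypothesis for $h_{\ell,i_\ell}(\x)$, and fold the new indicator and the new weight factor into the two existing products. This is the same bookkeeping as in Lemma~3 with $\boldeta_{k,i_k}$ replaced throughout by $\w_{k,i_k}^\top \h_{k-1}(\x)$ and $u_{k,i_k,i_{k-1}}$ replaced by $w_{k,i_k,i_{k-1}}$.

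Once the lemma is in hand, the theorem follows by taking $\ell = L-1$, multiplying by $W_L$, and pulling the outer sum over $i_{L-1}$ together with the factor $w_{L,1,i_{L-1}}$ inside the existing path sum. Regrouping $\summ{i_0=1}{d} x_{i_0}\, w_{1,i_1,i_0} = \w_{1,i_1}^\top \x$, the summand becomes $f_\pi(\x)\, g_\pi(\x)$ for $\pi = (i_1,\ldots,i_{L-1})$, which is exactly Equation~\ref{eqn:to_prove_relu}. I do not anticipate any real obstacle: the only point meriting care is that the gate $\1(\w_{k,i_k}^\top \h_{k-1}(\x) \geq 0)$ appearing in the expansion matches the definition of $f_\pi$ on the nose (same $\h_{k-1}$ on both sides), whereas in the DLGN case the gate used the separate linear quantity $\boldeta$; having a single set of weights $W$ here actually makes the argument slightly shorter than its DLGN counterpart.
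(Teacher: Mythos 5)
Your proposal is correct and follows essentially the same route as the paper's own proof: the paper also establishes the layer-wise induction lemma expressing $\h_{\ell,i_\ell}(\x)$ as a sum over partial paths of gate products times weight products, and then concludes by applying it at $\ell = L-1$, multiplying by $W_L$, and regrouping the sums into $\sum_{\pi} f_\pi(\x)\, g_\pi(\x)$. Your observation that the ReLU case needs only one set of weights (with the gate $\1(\w_{k,i_k}^\top \h_{k-1}(\x) \geq 0)$ replacing the DLGN's separate $\boldeta$ quantity) matches the paper's treatment exactly.
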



We prove an intermediate lemma based on induction first.

\begin{lem}
Let $\ell \in \{1,2,\ldots, L-1\}$, and let $i_\ell \in [m]$
\begin{equation}
\h_{\ell,i_\ell}(\x) = \summ{i_0 = 1}{d} x_{i_0} \summ{i_1=1}{m}\summ{i_1=2}{m}\dotsm \summ{i_{\ell-1}=1}{m} \left(\prod_{k=1}^{\ell}  \1(W_{k,i_k}\h_{k-1}(\x)\geq0)  \right) \left(\prod_{k=1}^\ell w_{k, i_{k}, i_{k-1}}  \right)
\label{eqn:lem-induction-relu}
\end{equation}
\end{lem}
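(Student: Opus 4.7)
The plan is to prove this lemma by induction on $\ell$, in close parallel with the analogous DLGN-case lemma proved earlier in the appendix. The definition of $\h_\ell$ in Equation \ref{eqn:h-vector-defn-relu} has exactly the same two-factor structure as in the DLGN case --- a gating indicator multiplied by a linear combination of the previous layer --- so the algebraic expansion that unrolls $\h_{\ell, i_\ell}(\x)$ into a sum over paths will carry through almost unchanged. The essential difference is only that the gating indicator is now $\1(W_{k, i_k} \h_{k-1}(\x) \geq 0)$ rather than $\1(\boldeta_{k,i_k}(\x)\geq 0)$, but since this factor depends on the running summation indices only through $i_k$, it can be pulled freely across the sums over the earlier indices $i_0, \dots, i_{k-1}$.

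For the base case $\ell = 1$, I would simply unroll the definition: using $\h_0(\x) = \x$,
\[
\h_{1, i_1}(\x) = \1(W_{1, i_1} \h_0(\x) \geq 0) \cdot (W_{1, i_1} \x) = \1(W_{1, i_1} \h_0(\x) \geq 0) \sum_{i_0=1}^d w_{1, i_1, i_0} \, x_{i_0},
\]
which matches the right-hand side of Equation \ref{eqn:lem-induction-relu} when $\ell = 1$ (the inner sums over $i_1, \dots, i_{\ell-1}$ are empty in this case). For the inductive step, assuming the formula holds for $\ell$, I would invoke the recursion
\[
\h_{\ell+1, i_{\ell+1}}(\x) = \1(W_{\ell+1, i_{\ell+1}} \h_\ell(\x) \geq 0) \sum_{i_\ell = 1}^m w_{\ell+1, i_{\ell+1}, i_\ell} \, h_{\ell, i_\ell}(\x)
\]
and substitute the inductive hypothesis for each component $h_{\ell, i_\ell}(\x)$. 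The new outer sum over $i_\ell$ slots in alongside the existing sums over $i_1, \dots, i_{\ell-1}$, while the two new factors --- one indicator $\1(W_{\ell+1, i_{\ell+1}} \h_\ell(\x) \geq 0)$ and one weight $w_{\ell+1, i_{\ell+1}, i_\ell}$ --- extend the products in Equation \ref{eqn:lem-induction-relu} from $k = 1, \dots, \ell$ to $k = 1, \dots, \ell+1$, producing exactly the claimed identity for $\ell+1$.

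I do not expect any substantive obstacle; the argument is a bookkeeping exercise with the same shape as the DLGN case. The one subtle point worth flagging is that the indicator $\1(W_{k, i_k} \h_{k-1}(\x) \geq 0)$ depends recursively on all earlier weight matrices through $\h_{k-1}(\x)$, in contrast to the DLGN situation where $\boldeta_{k, i_k}(\x)$ has a clean linear closed form in the input. However, since the induction never needs to expand the indicator --- it is treated as an opaque function of $\x$ indexed only by $k$ and $i_k$ --- this deeper nesting is irrelevant to the telescoping, and the proof goes through verbatim. Once this lemma is in hand, the full ReLU decomposition (Theorem \ref{thm:relu-theorem}) follows by applying it with $\ell = L-1$, multiplying by $W_{L,1,i_{L-1}}$, and summing over $i_{L-1}$, exactly as done for the DLGN case.
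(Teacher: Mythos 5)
Your proposal is correct and follows essentially the same route as the paper's own proof: induction on $\ell$, a base case that unrolls the definition of $\h_{1,i_1}$, and an inductive step that substitutes the hypothesis into the recursion $\h_{\ell+1,i_{\ell+1}}(\x) = \1(W_{\ell+1,i_{\ell+1}}\h_\ell(\x)\geq 0)\sum_{i_\ell} w_{\ell+1,i_{\ell+1},i_\ell}\,h_{\ell,i_\ell}(\x)$ and absorbs the two new factors into the products. Your observation that the indicator need never be expanded --- it depends only on $k$, $i_k$, and $\x$, not on the earlier path indices --- is precisely the point that makes the paper's argument (and yours) go through.
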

\begin{proof}

We will prove by Induction.\\
Basis Step: When $\ell = 1$, the RHS of Equation \ref{eqn:lem-induction-relu} becomes
\[
\h_{1,i}(\x) = W_{1,i}\x \1(W_{1,i}\h_{0}(\x)\geq0) = W_{1,i}\x\1(W_{1,i}x\geq0)
\]
which is true by the definition of $\h$ from Equation \ref{eqn:h-vector-defn-relu} and that $\h_0(\x)=\x$.

Assume Equation \ref{eqn:lem-induction-relu} is true for some $\ell$. We now prove that it holds for $\ell+1$

From the definition of $\h$ in Equation \ref{eqn:h-vector-defn-relu} and the induction assumption, for any $i_{\ell+1} \in [m]$
\begin{align*}
\h_{\ell+1,i_{\ell+1}}(\x) 
&=  \1(W_{\ell+1,i_{l+1}} \h_{\ell}(\x) \geq 0) \left( W_{\ell+1, i_{l+1}} \h_{\ell}(\x) \right) \\
&=  \1(W_{\ell+1,i_{\ell+1}}\h_{\ell}(\x) \geq 0)  \sum_{i_\ell =1}^m w_{\ell+1,i_{\ell+1},i_\ell} h_{\ell, i_\ell}(\x)   \\
&= \summ{i_0 = 1}{d} x_{i_0} \summ{i_1=1}{m}\summ{i_1=2}{m}\dotsm \summ{i_{\ell}=1}{m} \left(\prod_{k=1}^{\ell+1}  \1(W_{k,i_{k}}\h_{k-1}(\x) \geq 0)  \right) \left(\prod_{k=1}^{\ell+1} w_{k, i_{k}, i_{k-1}}  \right)
\end{align*}
\end{proof}

\begin{proof}[Proof of Theorem 1]
Using Lemma 4 for $\ell={L-1}$ and some $i_{L-1} \in [m]$ we have
\[
h_{L-1, i_{L-1}}(\x) = \summ{i_0 = 1}{d} x_{i_0} \summ{i_1=1}{m}\summ{i_1=2}{m}\dotsm \summ{i_{L-2}=1}{m} \left(\prod_{k=1}^{L-1}  \1(W_{k,i_{k}}\h_{k-1}(\x) \geq 0)  \right) \left(\prod_{k=1}^{L-1} W_{k, i_{k}, i_{k-1}}  \right)
\]

Consider the RHS of Equation \ref{eqn:to_prove_relu}
\begin{align*}
W_L \left(\h_{L-1}(\x) \right)
&=    \sum_{i_{L-1}=1}^m W_{L,1,i_{L-1}} h_{L-1, i_{L-1}}(\x) \\
&= \sum_{i_{L-1}=1}^m W_{L,1,i_{L-1}} \summ{i_0 = 1}{d} x_{i_0} \summ{i_1=1}{m}\summ{i_1=2}{m}\dotsm \summ{i_{L-2}=1}{m} \left(\prod_{k=1}^{L-1}  \1(W_{k,i_{k}}\h_{k-1}(\x) \geq 0)  \right) \left(\prod_{k=1}^{L-1} W_{k, i_{k}, i_{k-1}}  \right) \\
&=  W_{L,1,i_{L-1}} \summ{i_0 = 1}{d} x_{i_0} \summ{i_1=1}{m}\summ{i_1=2}{m}\dotsm \summ{i_{L-2}=1}{m} \sum_{i_{L-1}=1}^m\left(\prod_{k=1}^{L-1}  \1(W_{k,i_{k}}\h_{k-1}(\x) \geq 0)  \right) \left(\prod_{k=1}^{L-1} W_{k, i_{k}, i_{k-1}}   \right) \\
&= \summ{i_0 = 1}{d} x_{i_0} \sum_{\pi \in [m]^{L-1}} f_\pi(\x)  \left(\prod_{k=2}^{L-1} W_{k, i_{k}, i_{k-1}}   \right) 
 W_{L,1,i_{L-1}} W_{1,i_1,i_0} \\
&=  \sum_{\pi \in [m]^{L-1}} f_\pi(\x)  \left(\prod_{k=2}^{L-1} W_{k, i_{k}, i_{k-1}} \right)  W_{L,1,i_{L-1}}  \summ{i_0 = 1}{d} x_{i_0} W_{1,i_1,i_0}  \\
&=  \sum_{\pi \in [m]^{L-1}} f_\pi(\x)  \left(\prod_{k=2}^{L-1} W_{k, i_{k}, i_{k-1}} \right)  W_{L,1,i_{L-1}}  w_{1,i_1}^\top \x \\
&=  \sum_{\pi \in [m]^{L-1}} f_\pi(\x)  g_\pi(\x) \\
\end{align*}

\end{proof}
\vfill



\end{document}